\title{Model-Based Diffusion for Trajectory Optimization}
\author{%
  Chaoyi Pan$^*$, 
  Zeji Yi$^*$,
  Guanya Shi$^\dagger$, 
  Guannan Qu$^\dagger$ \\
  Carnegie Mellon University \\
  \texttt{\{chaoyip,zejiy,guanyas,gqu\}@andrew.cmu.edu} \\
}
\newcommand{\Yi}[1]{Y^{(#1)}} %
\newcommand{\pYi}[1]{p_{#1}} %
\newcommand{\sYi}[2]{Y^{(#1)}} %
\newcommand{\sba}[1]{\sqrt{\bar{\alpha}_{#1}}}
\newcommand{\osec}[1]{${#1} \, \text{s}$}
\newcommand{\msec}[2]{${#1} \, \text{m}\, {#2} \, \text{s}$}
\DeclareMathOperator*{\argmax}{
arg\,max}
\DeclareMathOperator*{\argmin}{arg\,min}
\newtheorem{theorem}{Theorem}
\newtheorem{defin}[theorem]{Definition}
\newtheorem{prop}[theorem]{Proposition}
\DeclareTextFontCommand{\boldemph}{\bfseries\itshape}
\newcommand{\SA}{Monte Carlo score ascent}
\begin{document}

\maketitle

\begin{abstract}
    Recent advances in diffusion models have demonstrated their strong capabilities in generating high-fidelity samples from complex distributions through an iterative refinement process.
    Despite the empirical success of diffusion models in motion planning and control, the model-free nature of these methods does not leverage readily available model information and limits their generalization to new scenarios beyond the training data (e.g., new robots with different dynamics).
    In this work, we introduce \underline{M}odel-\underline{B}ased \underline{D}iffusion (MBD), an optimization approach using the diffusion process to solve trajectory optimization (TO) problems \boldemph{without data}. The key idea is to explicitly compute the score function by leveraging the model information in TO problems, which is why we refer to our approach as \boldemph{model-based} diffusion. Moreover, although MBD does not require external data, it can be naturally integrated with data of diverse qualities to steer the diffusion process.
    We also reveal that MBD has interesting connections to sampling-based optimization.
    Empirical evaluations show that MBD outperforms state-of-the-art reinforcement learning and sampling-based TO methods in challenging contact-rich tasks. Additionally, MBD's ability to integrate with data enhances its versatility and practical applicability, even with imperfect and infeasible data (e.g., partial-state demonstrations for high-dimensional humanoids), beyond the scope of standard diffusion models.
    Videos and codes: \url{https://lecar-lab.github.io/mbd/}
\end{abstract}

\section{Introduction}

\begin{wrapfigure}{r}{0.68\textwidth}
    \centering
    \includegraphics[width=0.68\textwidth,bb=0 0 750.000 502.500]{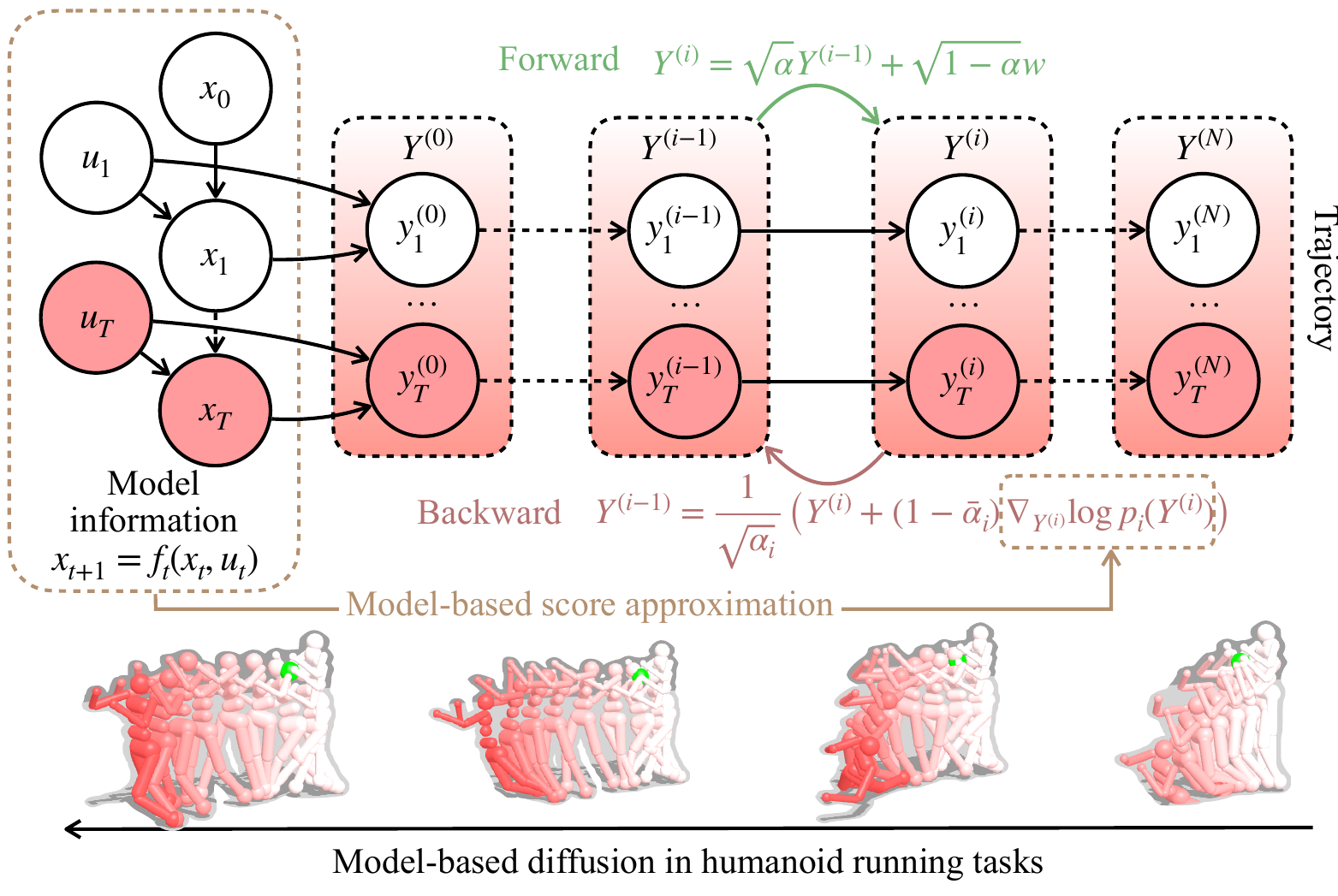}
    \vspace{-0.8cm}
    \caption{MBD refines the trajectory by leveraging the dynamics model directly without relying on demonstration data.}
    \vspace{-0.2cm}
    \label{fig:mbd}
\end{wrapfigure}

Trajectory optimization (TO) aims to optimize the state and control sequence to minimize a cost function while subject to specified dynamics and constraints.
Given nonlinear, non-smooth dynamics and non-convex objectives and constraints, traditional optimization methods like gradient-based methods and interior point methods are less effective in solving TO problems.
In response, diffusion models have emerged as a powerful tool for trajectory generation in complex dynamical systems due to their expressiveness and scalability~\cite{carvalhoMotionPlanningDiffusion2023, tevetHumanMotionDiffusion2022, jiangMotionDiffuserControllableMultiAgent2023, jannerPlanningDiffusionFlexible2022, liangAdaptDiffuserDiffusionModels2023, ajayConditionalGenerativeModeling2023}.

Although diffusion models excel when learning from large-scale, high-dimensional, and high-quality demonstrations, their dependency on such data limits their practicality. 
For example, after training a manipulation task with a specific robotic arm, the model may struggle to generalize to new tasks with a different arm as the underlying dynamics change. 
This limitation arises from the model-free nature of existing diffusion-based methods, which do not leverage readily available model information to enhance adaptability. Moreover, existing diffusion-based approaches often require high-quality (in terms of optimality and feasibility) demonstration data, which limits their applications in various scenarios with imperfect data, such as dynamically infeasible trajectories (e.g., generated by high-level planners using simplified models) and partial demonstrations (e.g., lower-body-only demonstrations for a high-dimensional humanoid).

Fortunately, unlike diffusion model's applications in vision or language where data is from unknown distributions (e.g., internet-scale image data), in trajectory optimization, we often know the distribution of desired trajectories, which is described by the optimization objectives, constraints, and the underlying dynamics model, although such a distribution is intractable to directly sample from.
Diffusion models offer a tantalizing new perspective, by iteratively refining samples from isotropic Gaussians to meaningful desired distributions in manageable steps, rather than directly learning the complex desired distribution.
Inspired by this, we propose Model-Based Diffusion (MBD) that utilizes model information to approximate the gradient of the log probability density function (a.k.a. score function) and uses it to iteratively refine sampled trajectories to solve TO problems, as depicted in~\cref{fig:mbd}. 
This model-centric strategy allows for the generation of dynamically feasible trajectories in a data-free manner, and gradually moves them towards more optimal solutions. 
Furthermore, by using demonstrations as observations of the target distribution, MBD can be smoothly combined with data of different qualities to steer the diffusion process and enhance its effectiveness.
Particularly, we merge the demonstration data into the sampling process by evaluating their likelihoods with the model and use them to improve the estimation of the score function.
Our contributions are threefold:

\begin{itemize}[leftmargin=*]
    \label{item:contributions}
    \item We introduce the Model-Based Diffusion (MBD) framework for trajectory optimization, utilizing the dynamics model to estimate the score function. This enables an effective trajectory planner given non-smooth dynamics and non-convex objectives, such as contact-rich manipulation tasks or high-dimensional humanoids.
    \item Our analysis and empirical evaluations demonstrate that MBD matches, and often exceeds, the performance of existing reinforcement learning and sampling-based TO methods. 
    In particular, MBD outperforms PPO by 59\% in various tasks within tens of seconds of diffusing.
    \item 
    We demonstrate MBD's flexibility in utilizing diverse imperfect data to steer the diffusion process and further enhance the performance. Specifically, the resulting whole-body humanoid trajectory from MBD is more natural by utilizing the lower-body-state-only human motion data.
    Similarly, MBD can effectively address long-horizon sparse-reward Umaze navigation tasks by leveraging infeasible demonstrations generated by an RRT planner with simplified dynamics.
\end{itemize}

\section{Related Work}

\textbf{Diffusion Models.}
Diffusion models have been widely adopted as generative models for high-dimensional data, such as image~\cite{songDenoisingDiffusionImplicit2020}, audio~\cite{chenWaveGradEstimatingGradients2020}, and text~\cite{austinStructuredDenoisingDiffusion2023} through iterative refinement processes~\cite{sohl-dicksteinDeepUnsupervisedLearning2015, hoDenoisingDiffusionProbabilistic2020}.
The backward process can be viewed as gradient prediction~\cite{songGenerativeModelingEstimating2020} or score matching~\cite{songScoreBasedGenerativeModeling2021}, which learns the score function to move samples towards the data distribution.
We deliver new methods to perform the backward diffusion process using the available model information.

\textbf{Sampling-based Optimization.} Optimization involving black-box functions is widely applied across various fields, including hyperparameter tuning and experimental design \cite{snoekPracticalBayesianOptimization2012, hernandez-lobatoParallelDistributedThompson2017}.
Evolutionary algorithms like CMA-ES are often used to tackle black-box optimization problems, dynamically modifying the covariance matrix to produce new samples \cite{hansenReducingTimeComplexity2003}. Such problems can also be efficiently addressed within the Bayesian optimization framework \cite{sohl-dicksteinDeepUnsupervisedLearning2015, frazierTutorialBayesianOptimization2018}, which offers greater efficiency.
Nonetheless, traditional BO algorithms are generally restricted to low-dimensional problems.

\textbf{Trajectory Optimization.}
Traditionally, trajectory optimization (TO) is solved using gradient-based optimization, which faces challenges such as non-convex problem structures, nonlinear or discontinuous dynamics, and high-dimensional state and control action spaces.
As two equivalent formulations, direct methods~\cite{hargravesDirectTrajectoryOptimization2012} and shooting-based methods~\cite{howellALTROFastSolver2019} are commonly used to solve TO problems, where gradient-based optimizers such as Augmented Lagrangian~\cite{jalletConstrainedDifferentialDynamic2022}, Interior Point~\cite{kimInteriorPointMethodLargeScale2007}, and Sequential Quadratic Programming~\cite{nocedalSequentialQuadraticProgramming2006,shiNeuralSwarm2PlanningControl2022} are employed.
To leverage the parallelism of modern hardware and improve global convergence properties, sampling-based methods like Cross-Entropy Motion Planning (CEM) \cite{kobilarovCrossentropyMotionPlanning2012} and Model Predictive Path Integral (MPPI) \cite{williamsModelPredictivePath2017, yiCoVOMPCTheoreticalAnalysis2024} have been proposed to solve TO by sampling from target distributions.
To solve stochastic optimal control problems, trajectory optimization has also been framed as an inference problem in a probabilistic graphical model, where system dynamics defines the graph structure \cite{kappenOptimalControlGraphical2012,levineReinforcementLearningControl2018}.
This perspective extends methods such as iLQG by integrating approximate inference techniques to improve trajectory optimization \cite{toussaintRobotTrajectoryOptimization2009}.
The connection between diffusion and optimal control has been explored in \cite{bernerOptimalControlPerspective2023}, which motivates us to use diffusion models as solvers for trajectory optimization.

\textbf{Diffusion for Planning.}
Diffusion-based planners have been used to perform human motion generation \cite{carvalhoMotionPlanningDiffusion2023, tevetHumanMotionDiffusion2022} and multi-agent motion prediction \cite{jiangMotionDiffuserControllableMultiAgent2023}.
Diffusion models are capable of generating complete trajectories by folding both dynamics and optimization processes into a single framework, thus mitigating compounding errors and allowing flexible conditioning \cite{jannerPlanningDiffusionFlexible2022, liangAdaptDiffuserDiffusionModels2023, ajayConditionalGenerativeModeling2023}. In addition, they have been adeptly applied to policy generation, enhancing the capability to capture multimodal demonstration data in high-dimensional spaces for long-horizon tasks \cite{reussGoalConditionedImitationLearning2023, chiDiffusionPolicyVisuomotor2023}.
These works assume no access to the underlying dynamics, limiting the generalization to new environments.
To enforce dynamics constraints, SafeDiffuser~\cite{xiaoSafeDiffuserSafePlanning2023} integrates control barrier functions into the learned diffusion process, while Diffusion-CCSP~\cite{yangCompositionalDiffusionBasedContinuous2023} composes the learned geometric and physical conditions to guarantee constraint compliance.
Our approach uses diffusion models directly as solvers, rather than simply distilling solutions from demonstrations.

\section{Problem Statement and Background}

\textbf{Notations}:
We use lower (upper) scripts to specify the time (diffusion) step, e.g., $x_t, u_t, y_t$ represent the state, control and state-control pair at time $t$, and $Y^{(i)}$ represents the diffusion state at step $i$.

This paper focuses on a class of trajectory optimization (TO) problems whose objective is to find the sequences $ \{x_t\} $ and $ \{u_t\} $ that minimize the cost function $ J(x_{1:T} ; u_{1:T}) $ subject to the dynamics and constraints.
The optimization problem \footnote{We assume deterministic dynamics for simplicity to sample the dynamically feasible trajectory. The extension to stochastic dynamics is straightforward.}
can be formulated as follows:
\begin{subequations}\label{eq:to_problem}
    \begin{align}
        \min_{x_{1:T}, u_{1:T}} & J(x_{1:T} ; u_{1:T})= l_T(x_T) + \sum_{t=0}^{T-1} l_t(x_t, u_t) \label{eq:to_obj}            \\
        \text{s.t.}   \quad     & x_0                         = x_{\text{init}}                                                \\
                                & x_{t+1}   = f_t(x_t, u_t), \quad \forall t = 0, 1, \dots, T-1,             \label{eq:to_dyn} \\
                                & g_t(x_t, u_t)              \leq 0, \quad \forall t = 0, 1, \dots, T-1.
    \end{align}
\end{subequations}

where $ x_t \in \mathbb{R}^{n_x} $ and $ u_t \in \mathbb{R}^{n_u} $ are the state and control at time $ t $, $ f_t : \mathbb{R}^{n_x} \times \mathbb{R}^{n_u} \rightarrow \mathbb{R}^{n_x} $ represents the dynamics, $ g_t : \mathbb{R}^{n_x} \times \mathbb{R}^{n_u} \rightarrow \mathbb{R}^{n_g} $ are the constraint functions and $ l_t : \mathbb{R}^{n_x} \times \mathbb{R}^{n_u} \rightarrow \mathbb{R} $ are the stage costs.
We use $Y = [x_{1:T} ; u_{1:T}]$ to denote all decision variables. 
Traditionally, TO is solved using nonlinear programming, which faces challenges such as non-convex problem structures, nonlinear or discontinuous dynamics, and high-dimensional state and control action spaces.
Recently, there has been a growing interest in bypassing these challenges by directly generating samples from the optimal trajectory distribution using diffusion models trained on optimal demonstration data~\cite{carvalhoMotionPlanningDiffusion2023, liangAdaptDiffuserDiffusionModels2023, reussGoalConditionedImitationLearning2023, yangCompositionalDiffusionBasedContinuous2023}.

To use diffusion for TO, \eqref{eq:to_problem} is first transformed into a sampling problem. 
The target distribution $\pYi{0}(\Yi{0})$ is proportional to dynamical feasibility $p_d(Y) \propto \prod_{t=1}^{T} \mathbf{1}(x_t = f_{t-1}(x_{t-1},u_{t-1}))$, optimality $p_J(Y) \propto \exp{(-\frac{J(Y)}{\lambda})}$ and the constraints $p_g(Y) \propto \prod_{t=1}^{T} \mathbf{1}(g_t(x_t, u_t)\leq 0)$, i.e.,
\begin{equation}
    \pYi{0}(Y) \propto p_d(Y)p_J(Y)p_g(Y) \label{eq:p0}
\end{equation}
Obtaining the solution $Y^*$ from the TO problem in~\cref{eq:to_problem} is equivalent to sampling from~\cref{eq:p0} given a low temperature $\lambda \rightarrow 0$. 
In fact, in \cref{sec:Apd_low_t}, we prove that the distribution of $J(Y)$ with $Y\sim p_0(\cdot)$ converges in probability to the optimal value $J^*$ as $\lambda \rightarrow 0$, under mild assumptions.
However, it is generally difficult to directly sample from the high-dimensional and sparse target distribution $\pYi{0}(\cdot)$.
To address this issue, the diffusion process iteratively refines the samples following a backward process, which reverses a predefined forward process as shown in~\cref{fig:mbd}.
The forward process corrupts the original distribution $\pYi{0}(\cdot)$ to an isotropic Gaussian $\pYi{N}(\cdot)$ by incrementally adding small noise to it and scaling it down by $\sqrt{\alpha_i}$ to maintain an invariant noise covariance (see~\cref{fig:distributions}(b) for an example). Mathematically, this means we iteratively obtain $\Yi{i}\sim p_i(\cdot)$ with
$
    p_{i|i-1}(\cdot| \Yi{i-1}) \sim \mathcal{N}(\sqrt{\alpha_i} \Yi{i-1}, \sqrt{1-\alpha_i} I)
$.
Because the noise at each time step is independent, the conditional distribution of $\Yi{i}|\Yi{i-1}$ also leads to that of $\Yi{i}|\Yi{0}$:
\begin{align}
    p_{i|0}(\cdot | \Yi{0}) & \sim \mathcal{N}(\sqrt{\bar{\alpha}_i} \Yi{0}, \sqrt{1-\bar{\alpha}_i} I), \quad \bar{\alpha}_i = \prod_{k=1}^{i} \alpha_k. \label{eq:forward}
\end{align}
The backward process $\pYi{i-1|i}(\Yi{i-1}|\Yi{i})$ is the reverse of the forward process $\pYi{i|i-1}(\Yi{i}|\Yi{i-1})$, which removes the noise from the corrupted distribution $\pYi{N}(\cdot)$ to obtain the target distribution $\pYi{0}(\cdot)$.
The target distribution $\pYi{0}(\cdot)$ in the diffusion process is given by:
\begin{align}
    \pYi{i-1}(\Yi{i-1}) & = \int \pYi{i-1|i}(\Yi{i-1}|\Yi{i}) \pYi{i}(\Yi{i}) \, d\Yi{i},               \\
    \pYi{0}(\Yi{0})     & = \int \pYi{N}(\Yi{N}) \prod_{i=N}^{1} \pYi{i-1|i}(\Yi{i-1}|\Yi{i}) d\Yi{1:N}
\end{align}
Standard diffusion models~\cite{jannerPlanningDiffusionFlexible2022, liangAdaptDiffuserDiffusionModels2023,yangCompositionalDiffusionBasedContinuous2023}, which we refer to as Model-Free Diffusion (MFD), solve the backward process by learning score function merely from data. 
In contrast, we propose leveraging the dynamics model to estimate the score to improve the generalizability of the model and allow a natural integration with diverse quality data.

\section{Model-Based Diffusion}

\label[section]{sec:algo}

In this section, we formally introduce our MBD algorithm that leverages model information to perform backward process.
To streamline the discussion, in~\cref{sec:algo_unconstrained}, we first present MBD with \SA{} to solve simplified and generic unconstrained optimization problems. %
In~\cref{sec:algo_to}, we extend MBD to the constrained optimization setting to solve the TO problem given complex dynamics and constraints.
Lastly, in~\cref{sec:algo_demo}, we augment the MBD algorithm with demonstrations to improve sample quality and steer the diffusion process.

\subsection{Model-based Diffusion as Multi-stage Optimization}
\label[section]{sec:algo_unconstrained}

\begin{figure}[H]
    \centering
    \includegraphics[width=1.0\textwidth, bb = 0 0 839.515 201.600]{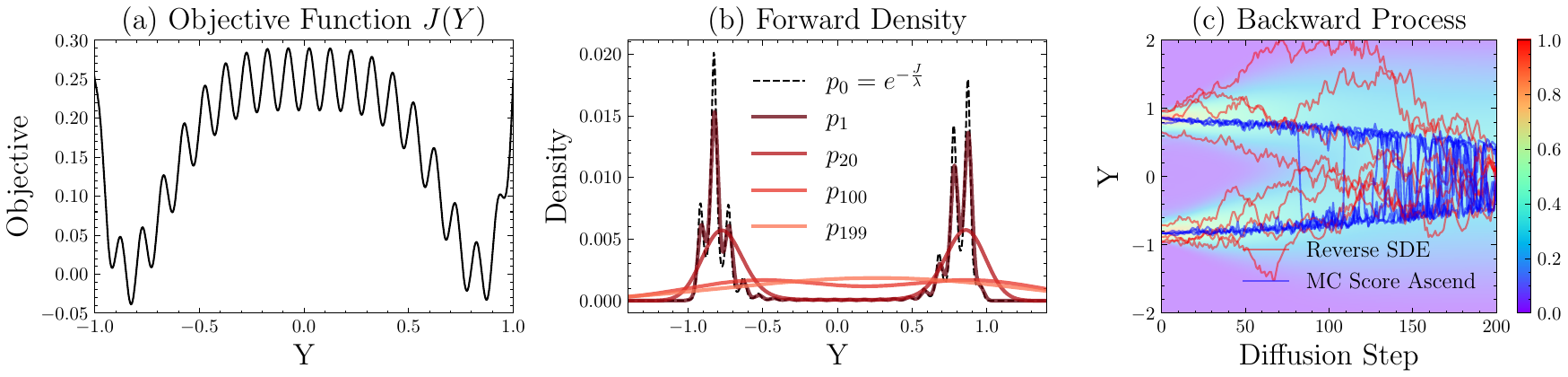}
    \vspace{-0.4cm}
    \caption{Reverse SDE vs. \SA{} (MCSA) on a synthetic highly non-convex objective function. 
    (a) Synthesized objective function with multiple local minima. 
    (b) The intermediate stage density $\pYi{i}(\cdot)$, where peaked $\pYi{0}(\cdot)$ is iteratively corrupted to a Gaussian $\pYi{N}(\cdot)$. 
    (c) Reverse SDE vs. MCSA: Background colors represent the density of $\pYi{i}(\cdot)$ at different stages. MCSA converges faster due to larger step size and lower sampling noise while still capturing the multimodality.} %
    \label{fig:distributions}
\end{figure}

We consider the reverse process for an unconstrained optimization problem $\min_Y J(Y)$, where the target distribution is $\pYi{0}(\Yi{0}) \propto \exp{(-\frac{J(\Yi{0})}{\lambda})}$.
In our MBD framework, ``model'' implies that we can evaluate $J(\Yi{0})$ for arbitrary $\Yi{0}$, enabling us to compute the target distribution up to the normalizing constant.%

MBD uses \SA{} instead of the commonly adopted reverse SDE approach in MFD.
Specifically, when denoising from $i$ to $i-1$, MBD performs one step of gradient ascent on $\log{\pYi{i}(\Yi{i})}$ and then scales the sample by the factor $\frac{1}{\sqrt{\alpha_{i}}}$ as defined in the forward process:
\begin{align}
    \Yi{i-1} =  \frac{1}{\sqrt{\alpha_{i}}}\left(\Yi{i} + (1-\bar{\alpha}_{i})\nabla_{\Yi{i}}\log{\pYi{i}(\Yi{i})}\right)
    \label[equation]{eq:reverse}
\end{align}
Critically, with the model-informed $\pYi{0}(\Yi{0})$, we can estimate the score function $\nabla_{\Yi{i}}\log{\pYi{i}(\Yi{i})}$ by connecting $\pYi{i}(\Yi{i})$ to $\pYi{0}(\Yi{0})$ via Bayes' rule:

\begin{subequations}
    {\small
        \begin{align}
            \nabla_{\Yi{i}}\log{\pYi{i}(\Yi{i})}                                                                                                                                                                                                        = & \frac{\nabla_{\Yi{i}}\int  \pYi{i\mid 0 }(\Yi{i} \mid \Yi{0}) \pYi{0}(\Yi{0}) \, d\Yi{0}}{\int  \pYi{i\mid 0 }(\Yi{i} \mid \Yi{0}) \pYi{0}(\Yi{0}) \, d\Yi{0}}                                                                                                  %
            =                                                                                                                                                                                                                                             \frac{\int  \nabla_{\Yi{i}}\pYi{i\mid 0 }(\Yi{i} \mid \Yi{0}) \pYi{0}(\Yi{0}) \, d\Yi{0}}{\int  \pYi{i\mid 0 }(\Yi{i} \mid \Yi{0}) \pYi{0}(\Yi{0}) \, d\Yi{0}}                                                                         \label{eq:score_b}         \\
            =                                                                                                                                                                                                                                             & \frac{\int -\frac{\Yi{i}-\sqrt{\bar{\alpha}_{i}}\Yi{0}}{1-\bar{\alpha}_{i}}\pYi{i\mid 0 }(\Yi{i} \mid \Yi{0}) \pYi{0}(\Yi{0}) \, d\Yi{0}}{\int  \pYi{i\mid 0 }(\Yi{i} \mid \Yi{0}) \pYi{0}(\Yi{0}) \, d\Yi{0}}                         \label{eq:score_c}       \\
            =                                                                                                                                                                                                                                             & -\frac{\Yi{i}}{1-\bar{\alpha}_{i}} + \frac{\sqrt{\bar{\alpha}_{i}}}{1-\bar{\alpha}_{i}} \frac{\int \Yi{0}\pYi{i\mid 0 }(\Yi{i} \mid \Yi{0}) \pYi{0}(\Yi{0}) \, d\Yi{0}}{\int  \pYi{i\mid 0 }(\Yi{i} \mid \Yi{0}) \pYi{0}(\Yi{0}) \, d\Yi{0}} \label{eq:score_d}
        \end{align}}
\end{subequations}

Between \cref{eq:score_b} and \cref{eq:score_c}, we use the forward Gaussian density in \cref{eq:forward}: $\pYi{i\mid 0 }(\Yi{i} \mid \Yi{0})\propto \exp(-\frac{1}{2} \frac{\left(\Yi{i} - \sqrt{\bar{\alpha}_{i}}\Yi{0}\right)^\top  \left(\Yi{i} - \sqrt{\bar{\alpha}_{i}}\Yi{0}\right)}{1-\bar{\alpha_i}})$. Its log-likelihood gradient is $\nabla_{\Yi{i}}\pYi{i\mid 0 }(\Yi{i} \mid \Yi{0}) = - \frac{1}{1-\bar{\alpha}_i} (\Yi{i} - \sqrt{\bar{\alpha}_{i}}\Yi{0}) p_{i\mid 0}(\Yi{i} \mid \Yi{0})$.
Given $\Yi{0}$ as the integration variable in~\cref{eq:score_d}, $\pYi{i\mid 0 }(\Yi{i} \mid \Yi{0})$ is evaluated as a function of $\Yi{0}$ parameterized by $\Yi{i}$. Based on that, we define the function $\phi_i(\Yi{0})$ as:
\begin{align}
    \phi_i(\Yi{0}) \propto \pYi{i\mid 0 }(\Yi{i} \mid \Yi{0})  \propto \exp(-\frac{1}{2} \frac{\left(\Yi{0}-\frac{\Yi{i}}{\sqrt{\bar{\alpha}_{i}}}\right)^\top  \left(\Yi{0}-\frac{\Yi{i}}{\sqrt{\bar{\alpha}_{i}}}\right)}{\frac{1-\bar{\alpha}_i}{\bar{\alpha}_{i}}}) \propto \mathcal{N}(\frac{\Yi{i}}{\sba{i}}, \frac{I}{\bar{\alpha}_{i}} - I)
    \label{eq:prop_phi}
\end{align}
This finding enables the Monte-Carlo estimation for computing the score function. We collect a batch of samples from $\phi_i(\cdot)$ which we denote as $\mathcal{Y}^{(i)}$ and approximate the score as:
\begin{subequations}
    \begin{align}
                & \nabla_{\Yi{i}}\log{\pYi{i}(\Yi{i})}    = -\frac{\Yi{i}}{1-\bar{\alpha}_{i}} + \frac{\sqrt{\bar{\alpha}_{i}}}{1-\bar{\alpha}_{i}} \frac{\int \Yi{0} \phi_i(\Yi{0}) \pYi{0}(\Yi{0}) \, d\Yi{0}}{\int  \phi_i(\Yi{0})  \pYi{0}(\Yi{0}) \, d\Yi{0}}                                                              \label[equation]{eq:score} \\
        \approx & -\frac{\Yi{i}}{1-\bar{\alpha}_{i}} + \frac{\sqrt{\bar{\alpha}_{i}}}{1-\bar{\alpha}_{i}} \underbrace{\frac{\sum_{\Yi{0}\in\mathcal{Y}^{(i)}} \Yi{0} \pYi{0}(\Yi{0})}{\sum_{\Yi{0}\in \mathcal{Y}^{(i)}} \pYi{0}(\Yi{0})}}_{\text{Monte Carlo Approximation}} \label[equation]{eq:Y0_bar}
        :=                                         -\frac{\Yi{i}}{1-\bar{\alpha}_{i}} + \frac{\sqrt{\bar{\alpha}_{i}}}{1-\bar{\alpha}_{i}}\bar{Y}^{(0)}(\mathcal{Y}^{(i)})                                                                                                                                                                 %
    \end{align}
\end{subequations}

\begin{table}[h!]
    \begin{small}
        \centering
        \begin{tabular}{@{}p{0.20\linewidth}p{0.44\linewidth}p{0.3\linewidth}@{}}
            \toprule
            \textbf{Aspect}     & \textbf{Model-Based Diffusion (MBD)}                                                 & \textbf{Model-Free Diffusion (MFD)}          \\
            \midrule
            Target Distribution & Known (\cref{eq:p0}), but hard to sample                                             & Unknown, but have data                       \\
            Objective           & Sample $\Yi{0}$ from high-likelihood region of $\pYi{0}(\cdot)$                      & Sample $\Yi{0} \sim \pYi{0}(\cdot)$          \\
            Score Approximation & Estimated using the model (\cref{eq:score}). Can be augmented with demonstrations (\cref{eq:target_demo,eq:Y0_bar_demo}) & Learned from data                            \\
            Backward Process    & Perform \SA{} (\cref{eq:reverse}) to move samples towards most-likely states         & Run reverse SDE to preserve sample diversity \\
            \bottomrule
        \end{tabular}
        \caption{Comparison of Model-Based Diffusion (MBD) and Model-Free Diffusion (MFD)}
        \label{tab:mbd_mfd}
    \end{small}
\end{table}

\textbf{Comparison between MFD and MBD. }\cref{tab:mbd_mfd} highlights the key differences between MBD and MFD, which originate from two assumptions made in MBD:
(a) a known target distribution $\pYi{0}(\Yi{0})$ given the model;
(b) the objective of sampling $\Yi{0}$ from the high-likelihood region of $\pYi{0}(\Yi{0})$ to minimize the cost function.
For (a), MBD leverages $\pYi{0}$ to estimate the score following~\cref{eq:score}, whereas MFD learns that from the data.
For (b), MBD runs \SA{} in \cref{eq:reverse} to quickly move the samples to the high-density region as depicted in~\cref{fig:distributions}(c), while MFD runs reverse SDE $\sYi{i-1}{k} = \frac{1}{\sqrt{\alpha_i}}\left(\sYi{i}{k}+\frac{1-\alpha_i}{2} \nabla_{\Yi{i}}\log \pYi{i}(\sYi{i}{k})\right)+\sqrt{1-\alpha_i} \mathbf{z}_i$, where $\mathbf{z}_i$ is Gaussian noise, to maintain the sample diversity.
Given low temperature $\lambda$, it can be shown that $\nabla\log{\pYi{i}(\Yi{i})} \approx -\frac{1}{(1-\bar{\alpha}_{i})}(\Yi{i} - \argmax{\pYi{i}(\cdot)})$ %
\footnote{See more elaborations in \cref{sec:Apd_low_t}}, 
i.e., the function $\log{\pYi{i}(\Yi{i})}$ is $\frac{1}{(1-\bar{\alpha}_{i})}$-smooth. Therefore, choosing the step size $(1-\bar{\alpha}_{i})$ in~\cref{eq:reverse} is considered optimal, as for $L$-smooth functions, $O(\frac{1}{L})$ is the step size that achieves the fastest convergence \cite{zeilerADADELTAAdaptiveLearning2012}.

\textbf{How diffusion helps?} The diffusion process plays an important role in helping \SA{} overcome the local minimum issue in highly non-convex optimization problems, as shown in ~\cref{fig:distributions}(a).
Compared with optimizing a highly non-convex objective, \SA{} is applied to the intermediate distribution $\pYi{i}(\cdot) = \int \pYi{0}(\Yi{0})p_{i\mid 0}(\cdot) d\Yi{0}$, which is made concave by convoluting $\pYi{0}(\cdot)$ with a Gaussian distribution $\pYi{i \mid 0}(\cdot)$, as shown in~\cref{fig:distributions}(b).
Starting from the strongly concave Gaussian distribution $\pYi{N} \sim \mathcal{N}(\mathbf{0}, I)$ with scale $\bar{\alpha}_N \rightarrow 0$, the density is easy to sample. The covariance of the sampling density $\Sigma_{\phi_i}=(\frac{1}{\bar{\alpha}_{i}} - 1) I$ is large when $i=N$, implying that we are searching a wide space for global minima.
In the less-noised stage, the intermediate distribution $\pYi{i}(\cdot)$ is more peaked and closer to the target distribution $\pYi{0}(\cdot)$, and $\bar{\alpha}_i \rightarrow 1$ produces a smaller sampling covariance $\Sigma_{\phi_i}$ to perform a local search.
By iteratively running gradient ascent on the smoothed distribution, MBD can effectively optimize a highly non-convex objective function as presented in~\cref{fig:distributions}. The MBD algorithm is formally depicted in~\cref{alg:mbd}.

\begin{algorithm}
    \caption{Model-based Diffusion for Generic Optimization}
    {\small
        \begin{algorithmic}[1]
            \State \textbf{Input:} $\Yi{N} \sim \mathcal{N}(\mathbf{0}, I)$
            \For {$i=N$ to $1$}
            \State Sample $\mathcal{Y}^{(i)} \sim \mathcal{N}(\frac{\Yi{i}}{\sba{i-1}}, (\frac{1}{\bar{\alpha}_{i-1}}-1) I)$ \label{alg:sample}
            \State Calculate~\cref{eq:Y0_bar} $\bar{Y}^{(0)}(\mathcal{Y}^{(i)})=\frac{\sum_{\Yi{0}\in\mathcal{Y}^{(i)}} \Yi{0} \pYi{0}(\Yi{0})}{\sum_{\Yi{0}\in \mathcal{Y}^{(i)}} \pYi{0}(\Yi{0})}$ \label{alg:mean}
            \State Estimate the score~\cref{eq:score}: $\nabla_{\Yi{i}}\log{\pYi{i}(\Yi{i})}
                \approx -\frac{\Yi{i}}{1-\bar{\alpha}_{i}} + \frac{\sqrt{\bar{\alpha}_{i}}}{1-\bar{\alpha}_{i}}\bar{Y}^{(0)}(\mathcal{Y}^{(i)})$ \label{alg:score}
            \State \SA{}~\cref{eq:reverse}: $\Yi{i-1} \leftarrow  \frac{1}{\sqrt{\alpha_{i}}}\left(\Yi{i} + (1-\bar{\alpha}_{i})\nabla_{\Yi{i}}\log{\pYi{i}(\Yi{i})}\right)$ \label{alg:reverse}
            \EndFor
        \end{algorithmic}
        \label[algorithm]{alg:mbd}
    }
\end{algorithm}

\textbf{Connection with Sampling-based Optimization. }
When diffusion step is set to $N=1$, MBD effectively reduces to the Cross-Entropy Method (CEM)~\cite{botevChapterCrossEntropyMethod2013} for optimization.
To see this, we can plug the estimated score~\cref{eq:Y0_bar} into the \SA{}~\cref{eq:reverse} and set $N=1$: $\Yi{0} = \frac{\bar{\alpha}_1}{\alpha_1} \bar{Y}^{(0)}(\mathcal{Y}^{(1)}) = \bar{Y}^{(0)}(\mathcal{Y}^{(1)})=\frac{\sum_{\Yi{0} \in \mathcal{Y}^{(1)}} \Yi{0} w(\Yi{0})}{\sum_{\Yi{0} \in \mathcal{Y}^{(1)}} w(\Yi{0})}$ where $w(\Yi{0}) = p_0(\Yi{0}) \propto \exp(-\frac{J(\Yi{0})}{\lambda})$ and $\mathcal{Y}^{(1)} \sim \mathcal{N}(\frac{\Yi{1}}{\alpha_0}, (\frac{1}{\alpha_{0}}-1) I)$.
This precisely mirrors the update mechanism in CEM, which aims to optimize the objective function $f_\text{CEM}(\Yi{0})=J(\Yi{0})$ and determine the sampling covariance $\Sigma_\text{CEM}=(\frac{1}{\alpha_{0}} - 1) I$, thus linking the sampling strategy of CEM with the $\alpha$ schedule in MBD.
The advances that distinguish MBD from CEM-like methods are (1) the careful scheduling of $\alpha$ and (2) the intermediate refinements on $p_i$, both following the forward diffusion process. This allows MBD to optimize for smoothed functions in the early stage and gradually refine the solution for the original objective. On the contrary, CEM's solution could either be biased given a large $\Sigma_\text{CEM}$ which overly smoothes the distribution as in $\pYi{20},\pYi{100}$ of~\cref{fig:distributions}(b), or stuck in local minima with a small $\Sigma_\text{CEM}$ as in $\pYi{1}$ of~\cref{fig:distributions}(b) where the distribution is highly non-concave.

\subsection{Model-based Diffusion for Trajectory Optimization}
\label[section]{sec:algo_to}

For TO, we have to accommodate the constraints in~\cref{eq:to_problem} which change the target distribution to $\pYi{0}(\Yi{0}) \propto p_d(\Yi{0})p_J(\Yi{0})p_g(\Yi{0})$.
Given that $p_d(\Yi{0})$ is a Dirac delta function that assigns non-zero probability only to dynamically feasible trajectories, sampling from $\phi_i(\Yi{0})$ could result in low densities.
To enhance sampling efficiency, we collect a batch of dynamically feasible samples $\mathcal{Y}^{(i)}_d $ from the distribution  $\phi_i(\Yi{0})p_d(\Yi{0})$ with model information. Proceeding from~\cref{eq:score}, and incorporating $\pYi{0}(\Yi{0}) \propto p_d(\Yi{0})p_J(\Yi{0})p_g(\Yi{0})$, we show the score function is:

\vspace{-0.5cm}
\begin{subequations}
    {\small
        \begin{align}
            \nabla_{\Yi{i}}\log{\pYi{i}(\Yi{i})}     
            =                                         & -\frac{\Yi{i}}{1-\bar{\alpha}_{i}} + \frac{\sqrt{\bar{\alpha}_{i}}}{1-\bar{\alpha}_{i}} \frac{\int \Yi{0} \phi_i(\Yi{0}) p_d(\Yi{0})p_g(\Yi{0})p_J(\Yi{0}) \, d\Yi{0}}{\int  \phi_i(\Yi{0})  p_d(\Yi{0})p_g(\Yi{0})p_J(\Yi{0}) \, d\Yi{0}}                       \\
            \approx                                   & -\frac{\Yi{i}}{1-\bar{\alpha}_{i}} + \frac{\sqrt{\bar{\alpha}_{i}}}{1-\bar{\alpha}_{i}} \frac{\sum_{\Yi{0} \in \mathcal{Y}^{(i)}_d} \Yi{0} p_J(\Yi{0}) p_g(\Yi{0})}{\sum_{\Yi{0}\in \mathcal{Y}^{(i)}_d} p_J(\Yi{0}) p_g(\Yi{0})} \label[equation]{eq:Y0_bar_to} \\
            =                                         & -\frac{\Yi{i}}{1-\bar{\alpha}_{i}} + \frac{\sqrt{\bar{\alpha}_{i}}}{1-\bar{\alpha}_{i}}\bar{Y}^{(0)},
            \label[equation]{eq:score_to}                                                                                                                                                                                                                                                                                \\
            \mathrm{where} \quad \bar{Y}^{(0)} =                           & \frac{\sum_{\Yi{0} \in \mathcal{Y}^{(i)}_d} \Yi{0} w(\Yi{0})}{\sum_{\Yi{0} \in \mathcal{Y}^{(i)}_d} w(\Yi{0})}, \quad w(\Yi{0}) = p_J(\Yi{0})p_g(\Yi{0}) \label[equation]{eq:Y0_bar_to2}
        \end{align}
    }
\end{subequations}

The model plays a crucial role in score esitimation by transforming infeasible samples $\mathcal{Y}^{(i)}$ from~\cref{alg:sample_to} in~\cref{alg:mbd_to} into feasible ones $\mathcal{Y}^{(i)}_d$.
The conversion is achieved by putting the control part $U=u_{1:T}$ of $\Yi{0} = [x_{1:T};u_{1:T}]$ into the dynamics~\cref{eq:to_dyn} recursively to get the dynamically feasible samples $\Yi{0}_{d}$ (\cref{alg:rollout}), which shares the same idea with the shooting method~\cite{howellALTROFastSolver2019} in TO.
MBD then evaluates the weight of each sample with $p_g(\Yi{0})p_J(\Yi{0})$ in~\cref{alg:mean_to}.
One common limitation of shooting methods is that they could be inefficient for long-horizon tasks due to the combinatorial explosion of the constrained space $p_g(Y) \propto \prod_{t=1}^{T} \mathbf{1}(g_t(x_t, u_t)\leq 0)$, leading to low constraint satisfaction rates.
To address this issue, we will introduce demonstration-augmented MBD in~\cref{sec:algo_demo} to guide the sampling process from the state space to improve sample quality.

\begin{algorithm}
    \caption{Model-based Diffusion for Trajectory Optimization}
    {\small
        \begin{algorithmic}[1]
            \State \textbf{Input:} $\Yi{N} \sim \mathcal{N}(\mathbf{0}, I)$
            \For {$i=N$ to $1$}
            \State Sample $\mathcal{Y}^{(i)} \sim \mathcal{N}(\frac{\Yi{i}}{\sba{i-1}}, (\frac{1}{\bar{\alpha}_{i-1}}-1) I)$ \label{alg:sample_to}
            \State Get dynamically feasible samples: $\mathcal{Y}^{(i)}_d \leftarrow \text{rollout}(\mathcal{Y}^{(i)})$ \label{alg:rollout}
            \State Calculate $\bar{Y}^{(0)}$ with~\cref{eq:Y0_bar_to2} (model only) or \cref{eq:Y0_bar_demo} (model $+$ demonstration) \label{alg:mean_to}
            \State Estimate the score~\cref{eq:score_to}: $\nabla_{\Yi{i}}\log{\pYi{i}(\Yi{i})}
                \approx -\frac{\Yi{i}}{1-\bar{\alpha}_{i}} + \frac{\sqrt{\bar{\alpha}_{i}}}{1-\bar{\alpha}_{i}}\bar{Y}^{(0)}$ \label{alg:score_to}
            \State \SA{}~\cref{eq:reverse}: $\Yi{i-1} \leftarrow  \frac{1}{\sqrt{\alpha_{i}}}\left(\Yi{i} + (1-\bar{\alpha}_{i})\nabla_{\Yi{i}}\log{\pYi{i}(\Yi{i})}\right)$ \label{alg:reverse_to}
            \EndFor
        \end{algorithmic}
    }
    \label{alg:mbd_to}
\end{algorithm}

\subsection{Model-based Diffusion with Demonstration}
\label[section]{sec:algo_demo}

With the ability to leverage model information, MBD can also be seamlessly integrated with various types of data, including imperfect or partial-state demonstrations by modeling them as noisy observations of the desired trajectory $p(Y_\text{demo} \mid \Yi{0}) \sim \mathcal{N}(\Yi{0}, \sigma^2 I)$.
Given suboptimal demonstrations, sampling from the posterior $p(\Yi{0} \mid Y_\text{demo}) \propto p_0(\Yi{0}) p(Y_\text{demo} \mid \Yi{0})$ could lead to poor solutions as the demonstration likelihood $p(Y_\text{demo} \mid \Yi{0})$ could dominate the model-based distribution $p_0(\Yi{0}) \propto p_d(\Yi{0}) p_J(\Yi{0}) p_g(\Yi{0})$ and mislead the sampling process. Rather, we assess $\Yi{0}$ using $p(Y_\text{demo} \mid \Yi{0})$, employing a similar technique to interchange the distribution's parameter with the random variable, as demonstrated in~\cref{eq:prop_phi}, to establish $p_{\text{demo}}(\Yi{0}) \propto p(Y_\text{demo} \mid \Yi{0}) \propto \mathcal{N}(\Yi{0}\mid Y_\text{demo}, \sigma^2 I)$.

To accommodate demonstrations of varying qualities, instead of fixing target to $p_0(\Yi{0}) p(Y_\text{demo} \mid \Yi{0})$, we propose seperating the $\pYi{0}(\Yi{0})$ from $p_\text{demo}(\Yi{0})$ to form a new target distribution\footnote{A comparison between the demonstration-augmented MBD and the vanilla MBD is illustrated in~\cref{fig:distributions_demo} with detailed breakdowns in~\cref{sec:demo_exp}.}:
\begin{equation}
    p'_0(\Yi{0}) \propto (1-\eta) p_d(\Yi{0})p_J(\Yi{0})p_g(\Yi{0}) + \eta p_\text{demo}(\Yi{0})p_J(Y_\text{demo})p_g(Y_\text{demo}) \label{eq:target_demo}
\end{equation}
where $\eta$ is a constant to balance the model and the demonstration.
Here, we have introduced two extra constant terms $p_J(Y_\text{demo})$ and $p_g(Y_\text{demo})$ to ensure that the demonstration likelihood is properly scaled to match the model likelihood $p_0(\Yi{0})$.
With these preparations, we propose to adaptively determine the significance of the demonstration by choosing $\eta$ as follows:
\begin{equation}
    \eta = \begin{cases}
        1 & p_d(\Yi{0})p_J(\Yi{0})p_g(\Yi{0}) < p_\text{demo}(\Yi{0})p_J(Y_\text{demo})p_g(Y_\text{demo}) \\
        0 & p_d(\Yi{0})p_J(\Yi{0})p_g(\Yi{0}) \ge p_\text{demo}(\Yi{0})p_J(Y_\text{demo})p_g(Y_\text{demo}).
    \end{cases}
\end{equation}
When samples have a high model-likelihood $p_0$, we ignore the demonstration and sample from the model. Otherwise, we trust the demonstration.
With the demonstration-augmented target distribution, we modify the way to calculate %
$\bar{Y}^{(0)}$ in \cref{eq:Y0_bar_to2} as follows to obtain the score estimate:
\begin{align}
    \bar{Y}^{(0)} =                           \frac{\sum_{\Yi{0}\in \mathcal{Y}^{(i)}_d} \Yi{0} w(\Yi{0})}{\sum_{\Yi{0}\in \mathcal{Y}^{(i)}_d} w(\Yi{0})}, \quad w(\Yi{0}) = \max{\left\{
        \begin{array}{c}
            p_d(\Yi{0})p_J(\Yi{0})p_g(\Yi{0}), \\
            p_\text{demo}(\Yi{0})p_J(Y_\text{demo})p_g(Y_\text{demo})
        \end{array}
        \right\}}. \label[equation]{eq:Y0_bar_demo}
\end{align}

\section{Experimental Results}

\label[section]{sec:exp}
The experimental section will focus on demonstrating the capabilities of MBD in: (1) its effectiveness as a zeroth-order solver for high-dimensional, non-convex, and non-smooth trajectory optimization problems, and (2) its flexibility in utilizing dynamically infeasible data to enhance performance and regularize solutions.
Our benchmark shows that MBD outperforms PPO by $59\%$ in various control tasks with $10\%$ computational time.

Beyond control problems, in Appendix \ref{sec:Apd_BBO}, we also show that MBD significantly improves sampling efficiency by an average of $23\%$ over leading baselines in high-dimensional (up to 800d) black-box optimization testbeds ~\cite{hansenCMAEvolutionStrategy2023,erikssonScalableGlobalOptimization2019,wangLearningSearchSpace2020,nayebiFrameworkBayesianOptimization2019,liuVersatileBlackBoxOptimization2020,papenmeierIncreasingScopeYou2023}. We also apply MBD to optimize an MLP network with 28K parameters in a \emph{gradient-free} manner, achieving 86\% accuracy within 2s for the MNIST classification task~\cite{MNISTDatabaseHandwritten}, which is comparable to the gradient-based optimizer (SGD with momentum, 93\% accuracy). 

\begin{table}[ht]
    \begin{small}
        \centering
        \begin{tabular}{lccccc}
            \hline
            \textbf{Task}    & \textbf{CMA-ES}  & \textbf{CEM}     & \textbf{MPPI}    & \textbf{RL}      & \textbf{MBD}                   \\
            \hline
            \hline
            Hopper           & $1.12 \pm 0.10$  & $0.65 \pm 0.12$  & $0.91 \pm 0.15$  & $1.40 \pm 0.04$  & $\mathbf{1.53} \pm \mathbf{0.03}$ \\
            Half Cheetah     & $0.44 \pm 0.10$  & $0.22 \pm 0.15$  & $0.20 \pm 0.14$  & $1.59 \pm 0.05$  & $\mathbf{2.31} \pm \mathbf{0.19}$ \\
            Ant              & $1.18 \pm 0.52$  & $0.85 \pm 0.17$  & $0.33 \pm 0.45$  & $3.26 \pm 1.61$  & $\mathbf{3.80} \pm \mathbf{0.35}$ \\
            Walker2D         & $0.83 \pm 0.04 $ & $1.06 \pm 0.04 $ & $0.90 \pm 0.05$  & $1.09 \pm 0.28$  & $\mathbf{2.63} \pm \mathbf{0.23}$ \\
            Humanoid Standup & $0.58 \pm 0.01 $ & $0.47 \pm 0.01$  & $0.53 \pm 0.05$  & $0.83 \pm 0.02$  & $\mathbf{0.99} \pm \mathbf{0.07}$ \\
            Humanoid Running & $0.60 \pm 0.11$  & $0.41 \pm 0.16$  & $0.59 \pm 0.14$  & $1.80 \pm 0.03$  & $\mathbf{2.92} \pm \mathbf{0.26}$ \\
            Push T           & $0.39 \pm 0.07$  & $0.25 \pm 0.09$  & $-0.13 \pm 0.09$ & $-0.63 \pm 0.16$ & $\mathbf{0.67} \pm \mathbf{0.10}$ \\
            \hline
        \end{tabular}
        \caption{Reward of different methods on non-continuous tasks.}
        \label{tab:reward}
    \end{small}
\end{table}

\begin{table}[ht]
    \begin{small}
        \centering
        \begin{tabular}{lccccc}
            \hline
            \textbf{Task}     & \textbf{CMA-ES} & \textbf{CEM}    & \textbf{MPPI}   & \textbf{RL}      & \textbf{MBD} \\
            \hline
            \hline
            Hopper            & \osec{29.3}     & \osec{26.5}     & \osec{26.4}     & \msec{17}{45.63} & \osec{26.5}     \\
            Half Cheetah      & \osec{29.5}     & \osec{26.4}     & \osec{26.7}     & \msec{4}{18.8}   & \osec{26.8}     \\
            Ant               & \osec{18.4}     & \osec{16.1}     & \osec{16.0}     & \msec{2}{46.8}   & \osec{16.2}     \\
            Walker2D          & \osec{37.5}     & \osec{34.5}     & \osec{34.7}     & \msec{5}{1.5}    & \osec{34.6}     \\
            Humanoid Standup  & \osec{20.8}     & \osec{17.6}     & \osec{17.7}     & \msec{4}{29}     & \osec{17.7}     \\
            Humanoid  Running & \osec{30.8}     & \osec{29.7}     & \osec{29.6}     & \msec{3}{34.7}   & \osec{30.0}     \\
            Push T            & \msec{10}{40.0} & \msec{10}{32.0} & \msec{10}{32.3} & \msec{67}{25.6}  & \msec{10}{32.8} \\
            \hline
        \end{tabular}
        \caption{Computational time of different methods on non-continuous tasks.}
        \label{tab:time}
    \end{small}
\end{table}

\subsection{MBD for Planning in Contact-rich Tasks}
\label{sec:exp_noncontinuous}

To test the effectiveness of MBD as a trajectory optimizer for systems involving non-smooth dynamics, we run MBD on both locomotion and manipulation tasks detailed in \cref{sec:env}.
The locomotion tasks includes hopper, half-cheetah, ant, walker2d, humanoid-standup, and humanoid-running. %
The selected manipulation task, pushT~\cite{chiDiffusionPolicyVisuomotor2023}, presents its own challenges due to the complexity introduced by contact dynamics and the long-horizon nature of the task.
These tasks are widely considered difficult due to their hybrid nature and high dimensionality.

MBD is compared with the state-of-the-art zeroth-order optimization methods, including CMA-ES~\cite{akimotoTheoreticalFoundationCMAES2012}, CEM~\cite{botevChapterCrossEntropyMethod2013}, and MPPI~\cite{williamsInformationTheoreticModelPredictive2018}, as well as reinforcement learning (RL) algorithms (e.g., PPO~\cite{schulmanProximalPolicyOptimization2017} and SAC~\cite{haarnoja2018soft}) on these tasks.
Please note that we use MBD and zeroth-order baselines to generate control sequences and replay them in an open-loop manner, whereas RL generates a closed-loop policy.
The RL implementation follows the high-performance parallelized framework from Google Brax~\cite{freemanBraxDifferentiablePhysics2021} elaborated in~\cref{sec:baseline}.
For the zeroth-order optimizer, we match the iteration and sample number with the MBD.
All the experiments were conducted on a single NVIDIA RTX 4070 Ti GPU.
Quantitative metrics including the average step reward and the computational time tested over $50$ steps repeated for $8$ seeds are reported in \cref{tab:reward,tab:time}.
MBD substantially outperforms zeroth-order optimization methods and even outperforms RL in most tasks. Specifically, for the pushT task, MBD achieves a significantly higher reward than the RL algorithm thanks to its iterative refinement process, which effectively explores the full control space while keeping fine-grained control to precisely push the object.
Compared with the computationally heavy RL algorithms, MBD only requires one-tenth of time, which is similar to other zeroth-order optimization methods. The optimization process of MBD is visualized in \cref{fig:traj}, where the iterative refinement process with the model plays a crucial role in optimizing high-dimensional tasks.

\begin{figure}[ht]
    \centering
    \includegraphics[width=1.0\linewidth]{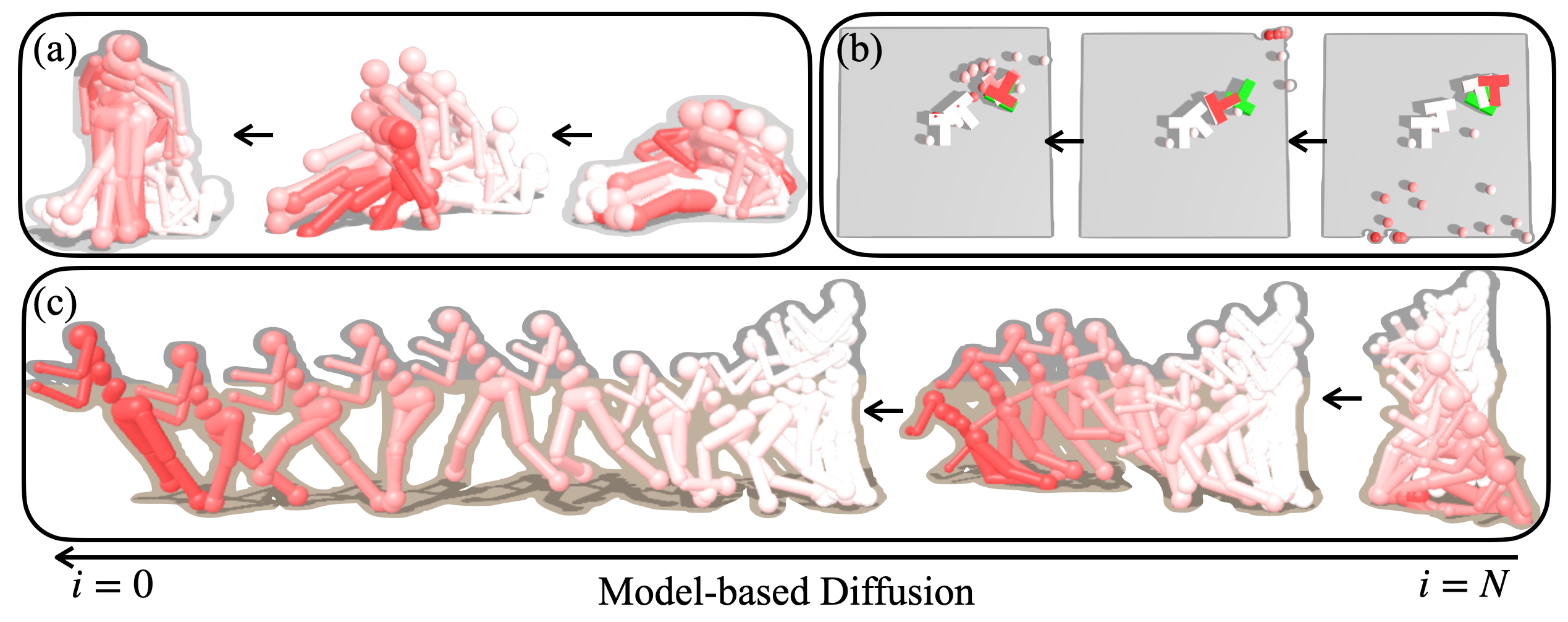}
    \vspace{-0.4cm}
    \caption{Optimization process of MBD on the (a) Humanoid Standup, (b) Push T, and (c) Humanoid Running tasks. The trajectory is iteratively refined to achieve the desired objective in the high-dimensional space with model information.}
    \label{fig:traj}
    \vspace{-0.5cm}
\end{figure}

\begin{figure}[ht]
    \centering
    \includegraphics[width=1.0\linewidth, bb=0 0 2356 518]{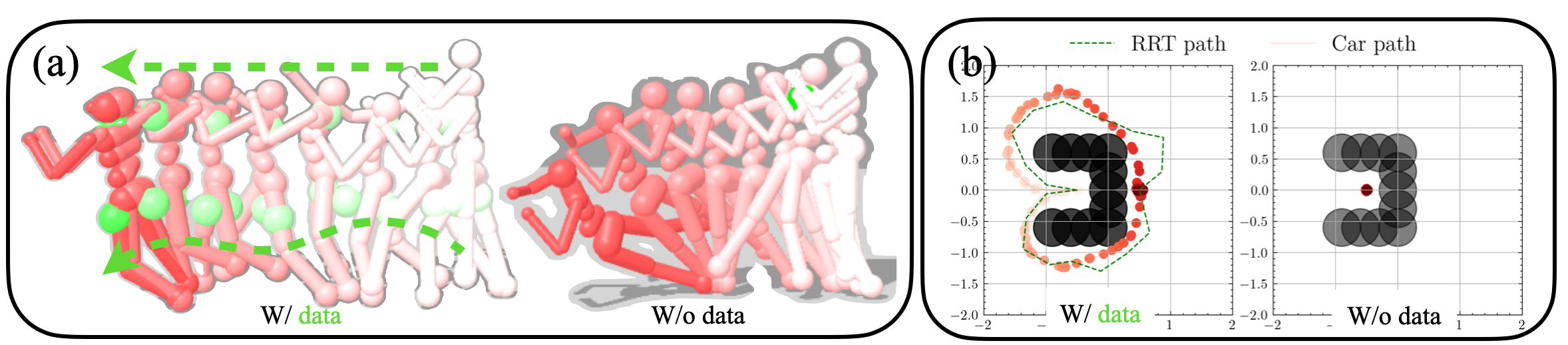}
    \caption{MBD optimized trajectory with data augmentation on the (a) Humanoid Jogging and (b) Car UMaze Navigation tasks. With \textcolor{green}{data augmentation}, the trajectory is regularized and refined to achieve the desired objective.}
    \label{fig:traj_demo}
\end{figure}

\subsection{Data-augmented MBD for Trajectory Optimization}
We also evaluate the performance of MBD with data augmentation on the Car UMaze Navigation and Humanoid Jogging tasks to see how partial and dynamically infeasible data can help the exploration of MBD and regularize the solution by steering the diffusion process.%

For Car UMaze Navigation, %
the map blocked by U-shaped obstacles is challenging to explore given a nonlinear dynamics model. Therefore, random shooting has a low chance of reaching the goal region. %
To sample with loosened dynamical constraints, we augment MBD with data from the RRT~\cite{lavalleRapidlyexploringRandomTrees1998} algorithm through goal-directed exploration with simplified dynamics.
\cref{fig:traj_demo}(b) shows the difference between data-augmented MBD and data-free one: the former can refine the infeasible trajectory and further improve it to reach the goal in less time, while the latter struggles to find a feasible solution.
The reason is that the infeasible trajectory from RRT serves as a good initialization for MBD, which can be further refined to minimize the cost function with MBD.

For Humanoid Jogging, we aim to regularize the solution for the task with multiple solutions to the desired one with partial state data.
Due to the infinite possibilities for humanoid jogging motion, the human motion data provide a good reference to regularize MBD to converge to a more human-like and robust solution instead of an aggressive or unstable one~\cite{heLearningHumantoHumanoidRealTime2024,pengAMPAdversarialMotion2021}.
We use data from the CMU Mocap dataset, from which we extract torso, thigh, and shin positions and use them as a partial state reference.
\cref{fig:traj_demo}(a) demonstrates a more stable motion generated by data-augmented MBD.

\section{Conclusion and Future Work}

\label{sec:conclusion}

This paper introduces Model-Based Diffusion (MBD), a novel diffusion-based trajectory optimization framework that employs a dynamics model to approximate the score function.
MBD not only outperforms existing methods in terms of sample efficiency and generalization, but also provides a new perspective on trajectory optimization by leveraging diffusion models as powerful samplers. 
Future directions involve theoretically understanding its convergence, optimizing the standard Gaussian forward process using the model information, adapting it to online tasks with receding horizon strategies, and exploring advanced sampling and scheduling techniques to further improve performance.

\bibliographystyle{plain}
\bibliography{ref}

\begin{thebibliography}{10}

\bibitem{ackleyConnectionistMachineGenetic1987}
David~H. Ackley.
\newblock {\em A {{Connectionist Machine}} for {{Genetic Hillclimbing}}}, volume~28 of {\em The {{Kluwer International Series}} in {{Engineering}} and {{Computer Science}}}.
\newblock Springer US, Boston, MA, 1987.

\bibitem{ajayConditionalGenerativeModeling2023}
Anurag Ajay, Yilun Du, Abhi Gupta, Joshua Tenenbaum, Tommi Jaakkola, and Pulkit Agrawal.
\newblock Is {{Conditional Generative Modeling}} all you need for {{Decision-Making}}?, July 2023.

\bibitem{akimotoTheoreticalFoundationCMAES2012}
Youhei Akimoto, Yuichi Nagata, Isao Ono, and Shigenobu Kobayashi.
\newblock Theoretical {{Foundation}} for {{CMA-ES}} from {{Information Geometry Perspective}}.
\newblock {\em Algorithmica}, 64(4):698--716, December 2012.

\bibitem{austinStructuredDenoisingDiffusion2023}
Jacob Austin, Daniel~D. Johnson, Jonathan Ho, Daniel Tarlow, and Rianne van~den Berg.
\newblock Structured {{Denoising Diffusion Models}} in {{Discrete State-Spaces}}, February 2023.

\bibitem{balandatBoTorchFrameworkEfficient2020}
Maximilian Balandat, Brian Karrer, Daniel~R. Jiang, Samuel Daulton, Benjamin Letham, Andrew~Gordon Wilson, and Eytan Bakshy.
\newblock {{BoTorch}}: {{A Framework}} for {{Efficient Monte-Carlo Bayesian Optimization}}, December 2020.

\bibitem{bernerOptimalControlPerspective2023}
Julius Berner, Lorenz Richter, and Karen Ullrich.
\newblock An optimal control perspective on diffusion-based generative modeling, July 2023.

\bibitem{botevChapterCrossEntropyMethod2013}
Zdravko~I. Botev, Dirk~P. Kroese, Reuven~Y. Rubinstein, and Pierre L'Ecuyer.
\newblock Chapter 3 - {{The Cross-Entropy Method}} for {{Optimization}}.
\newblock In C.~R. Rao and Venu Govindaraju, editors, {\em Handbook of {{Statistics}}}, volume~31 of {\em Handbook of {{Statistics}}}, pages 35--59. Elsevier, January 2013.

\bibitem{carvalhoMotionPlanningDiffusion2023}
Jo{\~a}o Carvalho, An~T. Le, Mark Baierl, Dorothea Koert, and Jan Peters.
\newblock Motion {{Planning Diffusion}}: {{Learning}} and {{Planning}} of {{Robot Motions}} with {{Diffusion Models}}.
\newblock In {\em 2023 {{IEEE}}/{{RSJ International Conference}} on {{Intelligent Robots}} and {{Systems}} ({{IROS}})}, pages 1916--1923, Detroit, MI, USA, October 2023. IEEE.

\bibitem{chenWaveGradEstimatingGradients2020}
Nanxin Chen, Yu~Zhang, Heiga Zen, Ron~J. Weiss, Mohammad Norouzi, and William Chan.
\newblock {{WaveGrad}}: {{Estimating Gradients}} for {{Waveform Generation}}, October 2020.

\bibitem{chiDiffusionPolicyVisuomotor2023}
Cheng Chi, Siyuan Feng, Yilun Du, Zhenjia Xu, Eric Cousineau, Benjamin Burchfiel, and Shuran Song.
\newblock Diffusion {{Policy}}: {{Visuomotor Policy Learning}} via {{Action Diffusion}}, June 2023.

\bibitem{erikssonScalableGlobalOptimization2019}
David Eriksson, Michael Pearce, Jacob Gardner, Ryan~D Turner, and Matthias Poloczek.
\newblock Scalable {{Global Optimization}} via {{Local Bayesian Optimization}}.
\newblock In {\em Advances in {{Neural Information Processing Systems}}}, volume~32. Curran Associates, Inc., 2019.

\bibitem{frazierTutorialBayesianOptimization2018}
Peter~I. Frazier.
\newblock A {{Tutorial}} on {{Bayesian Optimization}}, July 2018.

\bibitem{freemanBraxDifferentiablePhysics2021}
C.~Daniel Freeman, Erik Frey, Anton Raichuk, Sertan Girgin, Igor Mordatch, and Olivier Bachem.
\newblock Brax -- {{A Differentiable Physics Engine}} for {{Large Scale Rigid Body Simulation}}, June 2021.

\bibitem{haarnoja2018soft}
Tuomas Haarnoja, Aurick Zhou, Pieter Abbeel, and Sergey Levine.
\newblock Soft actor-critic: {{Off-policy}} maximum entropy deep reinforcement learning with a stochastic actor.
\newblock In {\em International Conference on Machine Learning}, pages 1861--1870. PMLR, 2018.

\bibitem{hansenCMAEvolutionStrategy2023}
Nikolaus Hansen.
\newblock The {{CMA Evolution Strategy}}: {{A Tutorial}}, March 2023.

\bibitem{hansenReducingTimeComplexity2003}
Nikolaus Hansen, Sibylle~D. M{\"u}ller, and Petros Koumoutsakos.
\newblock Reducing the {{Time Complexity}} of the {{Derandomized Evolution Strategy}} with {{Covariance Matrix Adaptation}} ({{CMA-ES}}).
\newblock {\em Evolutionary Computation}, 11(1):1--18, March 2003.

\bibitem{hargravesDirectTrajectoryOptimization2012}
C.~R. Hargraves and S.~W. Paris.
\newblock Direct trajectory optimization using nonlinear programming and collocation.
\newblock {\em Journal of Guidance, Control, and Dynamics}, May 2012.

\bibitem{heLearningHumantoHumanoidRealTime2024}
Tairan He, Zhengyi Luo, Wenli Xiao, Chong Zhang, Kris Kitani, Changliu Liu, and Guanya Shi.
\newblock Learning {{Human-to-Humanoid Real-Time Whole-Body Teleoperation}}, March 2024.

\bibitem{hernandez-lobatoParallelDistributedThompson2017}
Jos{\'e}~Miguel {Hern{\'a}ndez-Lobato}, James Requeima, Edward~O. {Pyzer-Knapp}, and Al{\'a}n {Aspuru-Guzik}.
\newblock Parallel and {{Distributed Thompson Sampling}} for {{Large-scale Accelerated Exploration}} of {{Chemical Space}}.
\newblock In {\em Proceedings of the 34th {{International Conference}} on {{Machine Learning}}}, pages 1470--1479. PMLR, July 2017.

\bibitem{hoDenoisingDiffusionProbabilistic2020}
Jonathan Ho, Ajay Jain, and Pieter Abbeel.
\newblock Denoising {{Diffusion Probabilistic Models}}, December 2020.

\bibitem{howellALTROFastSolver2019}
Taylor~A. Howell, Brian~E. Jackson, and Zachary Manchester.
\newblock {{ALTRO}}: {{A Fast Solver}} for {{Constrained Trajectory Optimization}}.
\newblock In {\em 2019 {{IEEE}}/{{RSJ International Conference}} on {{Intelligent Robots}} and {{Systems}} ({{IROS}})}, pages 7674--7679, Macau, China, November 2019. IEEE.

\bibitem{jalletConstrainedDifferentialDynamic2022}
Wilson Jallet, Antoine Bambade, Nicolas Mansard, and Justin Carpentier.
\newblock Constrained {{Differential Dynamic Programming}}: {{A}} primal-dual augmented {{Lagrangian}} approach, October 2022.

\bibitem{jannerPlanningDiffusionFlexible2022}
Michael Janner, Yilun Du, Joshua~B. Tenenbaum, and Sergey Levine.
\newblock Planning with {{Diffusion}} for {{Flexible Behavior Synthesis}}, December 2022.

\bibitem{jiangMotionDiffuserControllableMultiAgent2023}
Chiyu~Max Jiang, Andre Cornman, Cheolho Park, Ben Sapp, Yin Zhou, and Dragomir Anguelov.
\newblock {{MotionDiffuser}}: {{Controllable Multi-Agent Motion Prediction}} using {{Diffusion}}, June 2023.

\bibitem{kappenOptimalControlGraphical2012}
B.~Kappen, V.~Gomez, and M.~Opper.
\newblock Optimal control as a graphical model inference problem.
\newblock {\em Machine Learning}, 87(2):159--182, May 2012.

\bibitem{kimInteriorPointMethodLargeScale2007}
Seung-Jean Kim, K.~Koh, M.~Lustig, Stephen Boyd, and Dimitry Gorinevsky.
\newblock An {{Interior-Point Method}} for {{Large-Scale}} -{{Regularized Least Squares}}.
\newblock {\em IEEE Journal of Selected Topics in Signal Processing}, 1(4):606--617, December 2007.

\bibitem{kobilarovCrossentropyMotionPlanning2012}
Marin Kobilarov.
\newblock Cross-entropy motion planning.
\newblock {\em The International Journal of Robotics Research}, 31(7):855--871, June 2012.

\bibitem{lavalleRapidlyexploringRandomTrees1998}
Steven LaValle.
\newblock Rapidly-exploring random trees: {{A}} new tool for path planning.
\newblock {\em Research Report 9811}, 1998.

\bibitem{MNISTDatabaseHandwritten}
Y.~LECUN.
\newblock The mnist database of handwritten digits.
\newblock {\em http://yann.lecun.com/exdb/mnist/}, 1998.

\bibitem{levineReinforcementLearningControl2018}
Sergey Levine.
\newblock Reinforcement {{Learning}} and {{Control}} as {{Probabilistic Inference}}: {{Tutorial}} and {{Review}}, May 2018.

\bibitem{liangAdaptDiffuserDiffusionModels2023}
Zhixuan Liang, Yao Mu, Mingyu Ding, Fei Ni, Masayoshi Tomizuka, and Ping Luo.
\newblock {{AdaptDiffuser}}: {{Diffusion Models}} as {{Adaptive Self-evolving Planners}}, May 2023.

\bibitem{liuVersatileBlackBoxOptimization2020}
Jialin Liu, Antoine Moreau, Mike Preuss, Baptiste Roziere, Jeremy Rapin, Fabien Teytaud, and Olivier Teytaud.
\newblock Versatile {{Black-Box Optimization}}, April 2020.

\bibitem{nayebiFrameworkBayesianOptimization2019}
Amin Nayebi, Alexander Munteanu, and Matthias Poloczek.
\newblock A {{Framework}} for {{Bayesian Optimization}} in {{Embedded Subspaces}}.
\newblock In {\em Proceedings of the 36th {{International Conference}} on {{Machine Learning}}}, pages 4752--4761. PMLR, May 2019.

\bibitem{papenmeierIncreasingScopeYou2023}
Leonard Papenmeier, Luigi Nardi, and Matthias Poloczek.
\newblock Increasing the {{Scope}} as {{You Learn}}: {{Adaptive Bayesian Optimization}} in {{Nested Subspaces}}, April 2023.

\bibitem{pengAMPAdversarialMotion2021}
Xue~Bin Peng, Ze~Ma, Pieter Abbeel, Sergey Levine, and Angjoo Kanazawa.
\newblock {{AMP}}: {{Adversarial Motion Priors}} for {{Stylized Physics-Based Character Control}}.
\newblock {\em ACM Transactions on Graphics}, 40(4):1--20, August 2021.

\bibitem{reussGoalConditionedImitationLearning2023}
Moritz Reuss, Maximilian Li, Xiaogang Jia, and Rudolf Lioutikov.
\newblock Goal-{{Conditioned Imitation Learning}} using {{Score-based Diffusion Policies}}, June 2023.

\bibitem{schulmanProximalPolicyOptimization2017}
John Schulman, Filip Wolski, Prafulla Dhariwal, Alec Radford, and Oleg Klimov.
\newblock Proximal {{Policy Optimization Algorithms}}, August 2017.

\bibitem{shiNeuralSwarm2PlanningControl2022}
Guanya Shi, Wolfgang Honig, Xichen Shi, Yisong Yue, and Soon-Jo Chung.
\newblock Neural-{{Swarm2}}: {{Planning}} and {{Control}} of {{Heterogeneous Multirotor Swarms Using Learned Interactions}}.
\newblock {\em IEEE Transactions on Robotics}, 38(2):1063--1079, April 2022.

\bibitem{snoekPracticalBayesianOptimization2012}
Jasper Snoek, Hugo Larochelle, and Ryan~P Adams.
\newblock Practical {{Bayesian Optimization}} of {{Machine Learning Algorithms}}.
\newblock In {\em Advances in {{Neural Information Processing Systems}}}, volume~25. Curran Associates, Inc., 2012.

\bibitem{sohl-dicksteinDeepUnsupervisedLearning2015}
Jascha {Sohl-Dickstein}, Eric~A. Weiss, Niru Maheswaranathan, and Surya Ganguli.
\newblock Deep {{Unsupervised Learning}} using {{Nonequilibrium Thermodynamics}}, November 2015.

\bibitem{songDenoisingDiffusionImplicit2020}
Jiaming Song, Chenlin Meng, and Stefano Ermon.
\newblock Denoising {{Diffusion Implicit Models}}.
\newblock In {\em International {{Conference}} on {{Learning Representations}}}, October 2020.

\bibitem{songGenerativeModelingEstimating2020}
Yang Song and Stefano Ermon.
\newblock Generative {{Modeling}} by {{Estimating Gradients}} of the {{Data Distribution}}, October 2020.

\bibitem{songScoreBasedGenerativeModeling2021}
Yang Song, Jascha {Sohl-Dickstein}, Diederik~P. Kingma, Abhishek Kumar, Stefano Ermon, and Ben Poole.
\newblock Score-{{Based Generative Modeling}} through {{Stochastic Differential Equations}}, February 2021.

\bibitem{nocedalSequentialQuadraticProgramming2006}
Wenyu Sun and Ya-Xiang Yuan.
\newblock Sequential quadratic programming.
\newblock {\em Optimization Theory and Methods: Nonlinear Programming}, pages 523--560, 2006.

\bibitem{tevetHumanMotionDiffusion2022}
Guy Tevet, Sigal Raab, Brian Gordon, Yonatan Shafir, Daniel {Cohen-Or}, and Amit~H. Bermano.
\newblock Human {{Motion Diffusion Model}}, October 2022.

\bibitem{toussaintRobotTrajectoryOptimization2009}
Marc Toussaint.
\newblock Robot trajectory optimization using approximate inference.
\newblock In {\em Proceedings of the 26th {{Annual International Conference}} on {{Machine Learning}}}, pages 1049--1056, Montreal Quebec Canada, June 2009. ACM.

\bibitem{wangLearningSearchSpace2020}
Linnan Wang, Rodrigo Fonseca, and Yuandong Tian.
\newblock Learning {{Search Space Partition}} for {{Black-box Optimization}} using {{Monte Carlo Tree Search}}.
\newblock In {\em Advances in {{Neural Information Processing Systems}}}, volume~33, pages 19511--19522. Curran Associates, Inc., 2020.

\bibitem{williamsModelPredictivePath2017}
Grady Williams, Andrew Aldrich, and Evangelos~A. Theodorou.
\newblock Model {{Predictive Path Integral Control}}: {{From Theory}} to {{Parallel Computation}}.
\newblock {\em Journal of Guidance, Control, and Dynamics}, 40(2):344--357, February 2017.

\bibitem{williamsInformationTheoreticModelPredictive2018}
Grady Williams, Paul Drews, Brian Goldfain, James~M. Rehg, and Evangelos~A. Theodorou.
\newblock Information-{{Theoretic Model Predictive Control}}: {{Theory}} and {{Applications}} to {{Autonomous Driving}}.
\newblock {\em IEEE Transactions on Robotics}, 34(6):1603--1622, December 2018.

\bibitem{xiaoSafeDiffuserSafePlanning2023}
Wei Xiao, Tsun-Hsuan Wang, Chuang Gan, and Daniela Rus.
\newblock {{SafeDiffuser}}: {{Safe Planning}} with {{Diffusion Probabilistic Models}}, May 2023.

\bibitem{yangCompositionalDiffusionBasedContinuous2023}
Zhutian Yang, Jiayuan Mao, Yilun Du, Jiajun Wu, Joshua~B. Tenenbaum, Tom{\'a}s {Lozano-P{\'e}rez}, and Leslie~Pack Kaelbling.
\newblock Compositional {{Diffusion-Based Continuous Constraint Solvers}}, September 2023.

\bibitem{yiCoVOMPCTheoreticalAnalysis2024}
Zeji Yi, Chaoyi Pan, Guanqi He, Guannan Qu, and Guanya Shi.
\newblock {{CoVO-MPC}}: {{Theoretical Analysis}} of {{Sampling-based MPC}} and {{Optimal Covariance Design}}, January 2024.

\bibitem{yiImprovingSampleEfficiency2024}
Zeji Yi, Yunyue Wei, Chu~Xin Cheng, Kaibo He, and Yanan Sui.
\newblock Improving sample efficiency of high dimensional {{Bayesian}} optimization with {{MCMC}}, January 2024.

\bibitem{zeilerADADELTAAdaptiveLearning2012}
Matthew~D. Zeiler.
\newblock {{ADADELTA}}: {{An Adaptive Learning Rate Method}}, December 2012.

\end{thebibliography}

\appendix

\section{Appendix / Supplemental Material}

\subsection{Convergence of Distribution with Small Temperature}
\label{sec:Apd_low_t}
We first give the definition of the volume of the sub-level set for cost $J$.
\begin{defin}
    Let \( F : \mathcal{R}^d \rightarrow \mathcal{R} \) be a measurable function. Define the volume of the sub-level set for a given level \( t \) as: %
    \[
    V_F(t) = \int_{\mathcal{R}^d} \chi_{\{Y \in \mathcal{R}^d : F(Y) \leq t\}}(Y) \, dY,
    \]
    where \( \chi \) denotes the indicator function.
    \label[definition]{def:1}
\end{defin}
The volume function \( V_J(t) \)  plays a crucial role in linking geometric properties with probabilistic outcomes in optimization and learning algorithms. This function provides a quantitative measure that helps us to understand how changes in parameters like \( \lambda \) influence the distribution and concentration of probability mass. 

The interplay between geometry and probability, represented by \( V_J(t) \), is crucial for evaluating the convergence and stability of algorithms. It provides a significant method for utilizing the PDF of the random variable $\mathbf{Y}$ to constrain the CDF, thereby facilitating convergence in distribution.

\begin{prop}
    Given the target distribution $ \boldsymbol{Y} \sim p(\cdot)$ with $P(Y) \propto \exp\left(-\frac{J(Y)}{\lambda}\right), Y \in \mathcal{R}^d $, where \( J \) is a cost function with \(\min_Y J(Y) = 0\) and \( Y^* = \argmin J(Y) \), and assuming that the volume function \( V_J(t) \) is bounded by polynomial inequalities:
    \[
    Poly_l(t) \leq V_J(t)\leq Poly_u(t),
    \]
    where \( Poly_l(t) = \sum_{k=0}^M c^l_k t^{\alpha_k} \) and \( Poly_u(t) = \sum_{k=0}^M c^u_k t^{\alpha_k} \) are polynomials with coefficients satisfying \( c^l_k = 0 \) if and only if \( c^u_k = 0 \). The exponent term satisfies that $\alpha_k \in \mathcal{R}$, and $0< \alpha_0<\alpha_1<\cdots<\alpha_M < \infty$, It follows that:
    \[
    \lim_{\lambda \rightarrow 0} J(\boldsymbol{Y}) \xrightarrow{p} J(Y^*) = 0.
    \]
    The cost value $J(Y)$ converges in probability to $J(Y^*)$ as $\lambda \rightarrow 0$.
    \label[proposition]{prop:1}
\end{prop}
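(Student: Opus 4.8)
The plan is to prove convergence in probability straight from the definition. Since $J \ge 0$ with $\min_Y J(Y) = J(Y^*) = 0$, it suffices to show that for every fixed $\epsilon > 0$,
\[
P\left(J(\boldsymbol{Y}) \ge \epsilon\right) \xrightarrow{\lambda \to 0} 0 .
\]
Writing $Z(\lambda) = \int_{\mathcal{R}^d} \exp(-J(Y)/\lambda)\, dY$ for the normalizing constant, this probability equals the ratio of the tail integral $N(\lambda,\epsilon) = \int_{\{J(Y)\ge\epsilon\}} \exp(-J(Y)/\lambda)\, dY$ to $Z(\lambda)$. The entire argument then reduces to showing that the numerator carries an exponentially small factor $\exp(-\epsilon/\lambda)$ while the denominator decays only polynomially in $\lambda$, so that the ratio vanishes.

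For the numerator I would split the exponent: on the event $\{J \ge \epsilon\}$ one has $\exp(-J/\lambda) \le \exp(-\epsilon/(2\lambda))\,\exp(-J/(2\lambda))$, and therefore
\[
N(\lambda,\epsilon) \le \exp\left(-\frac{\epsilon}{2\lambda}\right)\int_{\mathcal{R}^d}\exp\left(-\frac{J(Y)}{2\lambda}\right) dY = \exp\left(-\frac{\epsilon}{2\lambda}\right) Z(2\lambda) .
\]
This sidesteps any incomplete-Gamma estimate and cleanly isolates the exponential decay, reducing everything to a two-sided polynomial control of $Z(\lambda)$.

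For that control I would invoke the layer-cake (coarea) representation together with the sublevel-set volume of \cref{def:1}, rewriting $Z(\lambda) = \int_0^\infty \exp(-t/\lambda)\, dV_J(t)$ and integrating by parts to obtain $Z(\lambda) = \tfrac{1}{\lambda}\int_0^\infty \exp(-t/\lambda)\, V_J(t)\, dt$; the boundary terms vanish because the upper bound forces $V_J(0) \le Poly_u(0) = 0$ and because $V_J$ grows only polynomially. Substituting $Poly_l(t) \le V_J(t) \le Poly_u(t)$ and applying the elementary identity $\int_0^\infty \exp(-t/\lambda)\, t^{\alpha}\, dt = \Gamma(\alpha+1)\,\lambda^{\alpha+1}$ termwise gives
\[
\sum_{k=0}^M c^l_k\,\Gamma(\alpha_k+1)\,\lambda^{\alpha_k} \le Z(\lambda) \le \sum_{k=0}^M c^u_k\,\Gamma(\alpha_k+1)\,\lambda^{\alpha_k} .
\]
As $\lambda \to 0$ the smallest-exponent term dominates both sides; here the hypothesis that $c^l_k = 0$ if and only if $c^u_k = 0$ is essential, because it forces the lower and upper bounds to share the same leading exponent $\alpha_{k_0}$ (with $k_0 = \min\{k : c^u_k \neq 0\}$) and hence $Z(\lambda) = \Theta(\lambda^{\alpha_{k_0}})$ with matching positive constants.

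Combining the pieces yields
\[
P\left(J(\boldsymbol{Y}) \ge \epsilon\right) = \frac{N(\lambda,\epsilon)}{Z(\lambda)} \le \exp\left(-\frac{\epsilon}{2\lambda}\right)\frac{Z(2\lambda)}{Z(\lambda)} \le C\,\exp\left(-\frac{\epsilon}{2\lambda}\right),
\]
where $C$ comes from the ratio of the leading-order estimates of $Z(2\lambda)$ and $Z(\lambda)$ (essentially the $2^{\alpha_{k_0}}$ factor). Since the exponential beats the constant, the right-hand side tends to $0$, which is exactly the claimed convergence in probability. I expect the main obstacle to be the rigorous justification of the layer-cake representation and the integration by parts — verifying that the boundary terms vanish and that termwise integration against the polynomial bounds is legitimate — together with pinning down that the matching-coefficient hypothesis forces a common leading exponent, which is precisely what makes the polynomial decay rate of $Z(\lambda)$ well-defined and keeps the denominator from decaying faster than the numerator's polynomial prefactor.
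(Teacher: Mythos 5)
Your proof is correct, but it takes a genuinely different route from the paper's. The paper works directly with the lower bound $P(J(\boldsymbol{Y})<\epsilon)\geq 1-\delta$: it reduces this to bounding the ratio $\int_{0}^{\epsilon}e^{-t/\lambda}dV_J(t)\big/\int_{\epsilon}^{\infty}e^{-t/\lambda}dV_J(t)$ from below by $\delta'=\frac{1-\delta}{\delta}$, and establishes that bound term by term via the change of variables $x=t/\lambda$, incomplete-gamma estimates $\int_0^{\epsilon_k}e^{-x}x^{\alpha_k}dx\geq\frac{c_k\delta'}{1+c_k\delta'}\Gamma(\alpha_k+1)$, and a final choice $\lambda\leq\epsilon/\epsilon_{\text{max}}$. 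You instead bound the complementary event: the exponent-splitting trick $e^{-J/\lambda}\leq e^{-\epsilon/(2\lambda)}e^{-J/(2\lambda)}$ on $\{J\geq\epsilon\}$ collapses the tail mass to $e^{-\epsilon/(2\lambda)}Z(2\lambda)$, and then everything reduces to the two-sided estimate $Z(\lambda)=\Theta(\lambda^{\alpha_{k_0}})$, which follows from the same layer-cake/integration-by-parts representation and termwise Gamma integrals that the paper uses. Your route buys two things: it avoids the incomplete-gamma machinery and the delicate coupled choice of $\epsilon_k$, $\epsilon_{\text{max}}$, and $\lambda$, and it yields a quantitative rate $P(J(\boldsymbol{Y})\geq\epsilon)=O(e^{-\epsilon/(2\lambda)})$ rather than a bare limit; the paper's term-by-term comparison, on the other hand, never needs to identify the leading-order asymptotics of the partition function. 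Both arguments use the matching-coefficient hypothesis ($c_k^l=0\iff c_k^u=0$) in the same essential way --- yours to force a common leading exponent $\alpha_{k_0}$ so that $Z(2\lambda)/Z(\lambda)$ stays bounded, the paper's so that each ratio $c_k=c_k^l/c_k^u$ is well-defined and positive --- and both share the same implicit assumption (unstated in the proposition) that the nonzero coefficients are positive, without which neither the paper's per-term ratios nor your lower bound $Z(\lambda)\gtrsim c^l_{k_0}\Gamma(\alpha_{k_0}+1)\lambda^{\alpha_{k_0}}$ would be usable.
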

The condition on the polynomial bounds of $V_J(t)$ is generally not restrictive. For instance, consider $J = \eta_c \left\| Y - Y_* \right\|^m$, where $Y^*$ is the optimal point and $\eta_c > 0$ is any constant. In this case, $V_J(t) = C t^{\frac{d}{m}}$, where $C$ is a constant, meets the constraint in a straightforward way. This condition can be extended beyond this simple scenario, as even if $J$ has multiple modes, it can still adhere to this polynomial constraint.

\begin{proof}
  The convergence in distribution of $\boldsymbol{Y}$ towards $Y^*$ as $\lambda \rightarrow 0$ is established by analyzing the behavior of the probability density function, defined up to a multiplicative constant. Consider the density $\boldsymbol{Y}_0 \sim p_{\lambda}(Y)$ approximating $Y^*$ when $\lambda$ approaches zero. %
\begin{subequations}
    \begin{align}
        P(J(\mathbf{Y})\leq t) & = \int_{\{J(Y) \leq t\}} p(Y) dY ,\label[equation]{eq:sub_int1}\\
& = \int_0^t \int_{\{J(Y) = x\}}p(Y) dY dx, \label[equation]{eq:sub_int2}\\
& \propto \int_0^t exp(-\frac{x}{\lambda})\int_{\{J(Y) = x\}} dY dx ,\label[equation]{eq:sub_int3}\\
& = \int_0^t exp(-\frac{x}{\lambda}) \frac{d V_J(x)}{dx} dx ,\label[equation]{eq:sub_int4}
    \end{align}
\end{subequations}
where \cref{eq:sub_int3} is valid since $P(Y) \propto \exp \left(-\frac{J(Y)}{\lambda}\right)$ and $J(Y)$ represents the sufficient statistics of the distribution. We can obtain \cref{eq:sub_int4}  by computing the derivative of $V_J(x)$ based on the volume definition as shown in \cref{def:1}.

We denote $J_{\text{min}} = 	\min_Y J(Y)=0$ and $J_{\text{max}} = 	\max_Y J(Y)$ with $J_{\text{max}}$ satisfying $0 \leq J_{\text{max}} \leq+\infty$. We proceed to analyze \cref{eq:sub_int4} by performing integration by parts as shown in \cref{eq:int_byparts}.
\begin{subequations}
    \begin{align}
        \int_{J_{\text{min}}}^{J_{\text{max}}} \exp\left(-\frac{t}{\lambda}\right) dV_J(t)
        &= \int_{J_{\text{min}}}^{J_{\text{max}}}  d \left[\exp\left(-\frac{t}{\lambda}\right) V_J(t)\right] + \frac{1}{\lambda} \exp\left(-\frac{t}{\lambda}\right) V_J(t)dt \label[equation]{eq:int_byparts}\\
        &= \exp\left(-\frac{t}{\lambda}\right) V_J(t)\bigg|_{J_{\text{min}}}^{J_{\text{max}}} + \frac{1}{\lambda} \int_{J_{\text{min}}}^{J_{\text{max}}} \exp\left(-\frac{t}{\lambda}\right) V_J(t)dt.\label[equation]{eq:int_dense}
    \end{align}
\end{subequations}
    
To establish convergence in probability, we need to demonstrate that for any small $\epsilon > 0$ and $\delta > 0$, there exists sufficiently small $\lambda>0$, such that %
\begin{equation}
    P(J(\boldsymbol{Y}) < \epsilon) = \frac{\int_{0}^{\epsilon} \exp\left(-\frac{t}{\lambda}\right)dV_J(t)}{\int_{0}^{J_{\text{max}}} \exp\left(-\frac{t}{\lambda}\right)dV_J(t)} \geq  1-\delta.
     \label[equation]{eq:conv_ratio}
\end{equation}
where the equality is due to \cref{eq:sub_int4}.
Setting $\delta' = \frac{1-\delta}{\delta}$, it suffices to show that:
\begin{equation}
\label[equation]{eq:org_ratio}
    \frac{\int_{0}^{\epsilon} \exp\left(-\frac{t}{\lambda}\right) dV_J(t)}{\int_{\epsilon}^{J_{\text{max}}} \exp\left(-\frac{t}{\lambda}\right) dV_J(t)} \geq \delta'.
\end{equation}

Assuming without loss of generality that $J_{\text{max}} = \infty$, becuase  $dV_J(t)\geq 0$, $exp(-\frac{t}{\lambda})>0$, we have: %
\begin{equation}
    \frac{\int_{0}^{\epsilon} \exp\left(-\frac{t}{\lambda}\right) dV_J(t)}{\int_{\epsilon}^{J_{\text{max}}} \exp\left(-\frac{t}{\lambda}\right) dV_J(t)} \geq \frac{\int_{0}^{\epsilon} \exp\left(-\frac{t}{\lambda}\right) dV_J(t)}{\int_{\epsilon}^{\infty} \exp\left(-\frac{t}{\lambda}\right) dV_J(t)}.
\end{equation}

This ratio as in \cref{eq:org_ratio} can be expanded using the integral bounds and the polynomial approximations for $V_J(t)$,then it suffices to show that 

\begin{equation}
    \frac{\int_{0}^{\epsilon} \exp\left(-\frac{t}{\lambda}\right) dV_J(t)}{\int_{\epsilon}^{\infty} \exp\left(-\frac{t}{\lambda}\right) dV_J(t)} \geq \delta'.
    \label[equation]{eq:int_ratio}
\end{equation}

By inserting \cref{eq:int_dense} into both the numerator and denominator on the LHS of \cref{eq:int_ratio}, we obtain
\begin{subequations}
    \begin{align}
           \frac{\int_{0}^{\epsilon} exp(-\frac{t}{\lambda})dV_J(t)}{ \int_{\epsilon}^{\infty} exp(-\frac{t}{\lambda})dV_J(t)} & = \frac{exp(-\frac{\epsilon}{\lambda})V(\epsilon) + \frac{1}{\lambda}\int_{0}^{\epsilon}  exp(-\frac{t}{\lambda})V_J(t)dt}{-exp(-\frac{\epsilon}{\lambda})V(\epsilon) + \frac{1}{\lambda}\int_{\epsilon}^{\infty}  exp(-\frac{t}{\lambda})V_J(t)dt} \\
         &  \geq \frac{\int_{0}^{\epsilon}  exp(-\frac{t}{\lambda})V_J(t)dt}{ \int_{\epsilon}^{\infty}  exp(-\frac{t}{\lambda})V_J(t)dt}\\
         &  \geq \frac{\int_{0}^{\epsilon}  exp(-\frac{t}{\lambda})\sum_{k=0}^M c^l_k t^{\alpha_k} dt}{ \int_{\epsilon}^{\infty}  exp(-\frac{t}{\lambda})\sum_{k=0}^M c^u_k t^{\alpha_k}  dt}\label[equation]{eq:poly_ratio}
    \end{align}
\end{subequations}
To bound the expression in \cref{eq:poly_ratio}, we first derive the following integrals by utilizing a change of variables \( x = \frac{t}{\lambda} \) %
, which simplifies the expressions:
\begin{subequations}
\begin{align}
   \int_{0}^{\epsilon} \exp\left(-\frac{t}{\lambda}\right) t^{\alpha_k} \, dt &= \lambda^{k+1} \int_{0}^{\frac{\epsilon}{\lambda}} \exp(-x) x^{\alpha_k} \, dx, \label{eq:transform_int1} \\
   \int_{\frac{\epsilon}{\lambda}}^{\infty} \exp\left(-\frac{t}{\lambda}\right) t^{\alpha_k} \, dt &= \lambda^{k+1} \int_{\frac{\epsilon}{\lambda}}^{\infty} \exp(-x) x^{\alpha_k} \, dx. \label{eq:transform_int2}
\end{align}
\end{subequations}

For these transformed integrals, we can observe that \(\int_0^\infty \exp(-x) x^{\alpha_k} \, dx = \Gamma(\alpha_k+1)\), the gamma function, which is well-defined for all non-negative \( \alpha_k \). Given that $\delta'$ is a function of $\delta$ , by applying the intermediate value theorem and definition of the limit of the integral, we can choose $\epsilon_k$ in such a way that:
\[
\int_0^{\epsilon_k} \exp(-x) x^{\alpha_k} \, dx \geq \frac{c_k \delta'}{1 + c_k \delta'} \Gamma(\alpha_k+1),
\]

where \( c_k = \frac{c_k^l}{c_k^u} \) denotes the ratio of coefficients in polynomial lower and upper bounds for \( V_J(t)\). By selecting \(\epsilon_{\text{max}} = \max{\epsilon_0,\epsilon_1,\cdots,\epsilon_M}\) to be the maximum of all such \(\epsilon_k\), ensuring coverage for all polynomial terms up to \( M \), we establish that:
\begin{equation}
    \frac{\int_{0}^{\epsilon_{\text{max}}} \exp\left(-\frac{t}{\lambda}\right) c^l_k t^{\alpha_k} \, dt}{\int_{\epsilon_{\text{max}}}^{\infty} \exp\left(-\frac{t}{\lambda}\right) c^u_k t^{\alpha_k} \, dt} \geq \delta', \quad \text{for all } k = 0, 1, \ldots, M.
\end{equation}

By ensuring that $\lambda \leq \frac{\epsilon}{\epsilon_{\text{max}}}$, we can conclude:

\begin{equation}
    \frac{\int_{0}^{\epsilon} \exp\left(-\frac{t}{\lambda}\right) \sum_{k=0}^M c^l_k t^{\alpha_k} \, dt}{\int_{\epsilon}^{\infty} \exp\left(-\frac{t}{\lambda}\right) \sum_{k=0}^M c^u_k t^{\alpha_k} \, dt} \geq \delta',
\end{equation}

Thus, the condition specified in \cref{eq:conv_ratio} is satisfied, validating that the distribution of \( \boldsymbol{Y} \) converges in distribution to \( Y^* \) as \( \lambda \) approaches zero.

\end{proof}
By adding another mild assumption regarding the landscape of $J$ near the global optimum, we can demonstrate the convergence of the random variable $\textbf{Y}$ itself, rather than the convergence of $J(\textbf{Y})$.
\begin{defin}
    We denote the minimum of the complementary set of neighborhood as:
    $$
    J^*_{\mathcal{B}}(\delta) = \min_{\left\|Y - Y^* \right\| > \delta} J(Y) - J(Y^*).
    $$
    \label[definition]{def:2}
\end{defin}
\begin{prop}
    Given the context and conditions specified in \cref{def:1,def:2} and \cref{prop:1}, and given that $J$ has only one golbal minimizer $Y^*$, i.e. 
    there exist small $\delta^*$, that for $ \delta \in (0,\delta^*]$, $J^*_{\mathcal{B}}(\delta)$ is strictly increasing, and $J^*_{\mathcal{B}}(\delta^*) < \infty$. 
    It follows that:
    \[
    \lim_{\lambda \rightarrow 0} \boldsymbol{Y} \xrightarrow{p} Y^*.
    \]
    The random variable $\mathbf{Y}$ converges in probability to $Y^*$ as $\lambda \rightarrow 0$.
    \label[proposition]{prop:2}
\end{prop}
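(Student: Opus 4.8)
The plan is to deduce the convergence of the random variable $\mathbf{Y}$ from the convergence of the cost value $J(\mathbf{Y})$ already established in \cref{prop:1}, using the landscape hypothesis of \cref{def:2} as the bridge. To prove $\mathbf{Y} \xrightarrow{p} Y^*$ I must show that for every $\delta > 0$ the probability $P(\|\mathbf{Y} - Y^*\| > \delta)$ tends to $0$ as $\lambda \rightarrow 0$. Since the event $\{\|\mathbf{Y} - Y^*\| > \delta\}$ only shrinks as $\delta$ increases, it suffices to treat $\delta \in (0, \delta^*]$, the regime in which the monotonicity assumption on $J^*_{\mathcal{B}}$ applies; larger $\delta$ then follow by containment of events.

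The first key step is to verify that the landscape gap $\gamma := J^*_{\mathcal{B}}(\delta)$ is strictly positive for each such $\delta$. Because $Y^*$ is the global minimizer, the excess cost $J(Y) - J(Y^*)$ is nonnegative everywhere, so $J^*_{\mathcal{B}}(\delta') \geq 0$ for any $\delta' > 0$. Choosing any $\delta' \in (0, \delta)$ and invoking the strict monotonicity of $J^*_{\mathcal{B}}$ on $(0, \delta^*]$ gives $\gamma = J^*_{\mathcal{B}}(\delta) > J^*_{\mathcal{B}}(\delta') \geq 0$, hence $\gamma > 0$; the finiteness $J^*_{\mathcal{B}}(\delta^*) < \infty$ together with the single-minimizer assumption ensures $J^*_{\mathcal{B}}$ is a genuine finite gap rather than a degenerate quantity.

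The second step is the event inclusion. By the very definition of $J^*_{\mathcal{B}}(\delta)$ as the minimum of $J(Y) - J(Y^*)$ over the complementary neighborhood $\{\|Y - Y^*\| > \delta\}$, the occurrence of $\|\mathbf{Y} - Y^*\| > \delta$ forces $J(\mathbf{Y}) - J(Y^*) \geq \gamma$, and since $J(Y^*) = 0$ this reads $J(\mathbf{Y}) \geq \gamma$. Thus $\{\|\mathbf{Y} - Y^*\| > \delta\} \subseteq \{J(\mathbf{Y}) \geq \gamma\}$, so that $P(\|\mathbf{Y} - Y^*\| > \delta) \leq P(J(\mathbf{Y}) \geq \gamma)$. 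Finally I invoke \cref{prop:1}: because $\gamma$ is a fixed strictly positive level, the convergence $J(\mathbf{Y}) \xrightarrow{p} 0$ yields $P(J(\mathbf{Y}) \geq \gamma) \rightarrow 0$ as $\lambda \rightarrow 0$, and combining this with the inclusion bound drives $P(\|\mathbf{Y} - Y^*\| > \delta) \rightarrow 0$, which is precisely $\mathbf{Y} \xrightarrow{p} Y^*$. The only genuinely delicate point is the rigorous justification that $\gamma > 0$ — in particular that the complementary neighborhood is bounded away from $Y^*$ in cost value — and this is exactly where the strict-monotonicity and unique-minimizer assumptions carry the argument; everything else is a clean reduction to \cref{prop:1}.
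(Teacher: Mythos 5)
Your proof is correct and follows essentially the same route as the paper's: both reduce the claim to \cref{prop:1} via the event inclusion $\{\|\mathbf{Y}-Y^*\|>\delta\}\subseteq\{J(\mathbf{Y})\ge J^*_{\mathcal{B}}(\delta)\}$ coming from \cref{def:2}. If anything, you are slightly more careful than the paper, since you explicitly verify $J^*_{\mathcal{B}}(\delta)>0$ from strict monotonicity and dispose of $\delta>\delta^*$ by containment of events, two points the paper's proof relies on only implicitly.
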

\begin{proof}

In order to prove that 
    $\lim_{\lambda \rightarrow 0} \boldsymbol{Y} \xrightarrow{p} Y^*.$ We need to prove that for any sufficient small $\gamma > 0$ and $\delta > 0$, there exists small $\lambda>0$, such that 
    \begin{equation}
        P(\left\|Y - Y^* \right\|\leq \delta)\geq 1-\gamma
    \end{equation}
From \cref{def:2} and due to the strict increase of $J^*_{\mathcal{B}}(\delta)$, 
\begin{equation}
    \left\|Y - Y^*\right\| \leq \delta ,\quad \forall Y \in \left\{Y \in \mathcal{R}^d \mid J(Y) - J(Y^*) < J^*_{\mathcal{B}}(\delta) \right\}, 
    \label[equation]{eq:Y_to_JY}
\end{equation}
where $0 < \delta \leq \delta^* $. Because if $\left\|Y - Y^*\right\| > \delta  $, $J(Y) - J(Y^*) < J^*_{\mathcal{B}}(\delta) = \min_{\left\|Y - Y^* \right\| > \delta} J(Y) - J(Y^*)$ contradicts \cref{def:2}. 

Given that $\lim_{\lambda \rightarrow 0} J(\boldsymbol{Y}) \xrightarrow{p} J(Y^*).$ and any sufficient small $\epsilon,\gamma>0$.
\begin{equation}
    P(J(Y) - J(Y^*)\leq \epsilon) \geq 1-\gamma
\end{equation}
Therefore, $\exists \lambda>0$, such that
\begin{equation}
    P(J(Y) - J(Y^*)\leq J^*_{\mathcal{B}}(\delta)) \geq 1-\gamma.
\end{equation}
And From \cref{eq:Y_to_JY}, we have that
\begin{equation}
   P(\left\|Y - Y^*\right\| \leq \delta) \geq P(J(Y) - J(Y^*)\leq J^*_{\mathcal{B}}(\delta)) \geq 1-\gamma
\end{equation}
We have that  $\mathbf{Y}$ converges in probability to $Y^*$ ,i.e, $\lim_{\lambda \rightarrow 0} \boldsymbol{Y} \xrightarrow{p} Y^*.$
    
\end{proof}
\begin{prop}
    Given the context and conditions specified in \cref{prop:1,prop:2} and the way we define the forward process as in \cref{eq:forward}. The diffused $Y_i$ converge in density to a Gaussian distribution.
    $$
    \lim_{\lambda \rightarrow 0} \boldsymbol{\Yi{i}}\xrightarrow{d} \mathcal{N}(\sqrt{\bar{\alpha}_i} Y^*, \sqrt{1-\bar{\alpha}_i} I),
    $$
    where $\boldsymbol{\Yi{i}} \sim \pYi{i}(\cdot)$ as in \cref{eq:forward}.
    \label[proposition]{prop:3}
\end{prop}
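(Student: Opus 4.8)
The plan is to exploit the fact that the forward process in \cref{eq:forward} merely convolves the $\lambda$-parametrized target $\pYi{0}\propto\exp(-J(\cdot)/\lambda)$ with a fixed Gaussian kernel whose scale is independent of $\lambda$, so that the convergence $\boldsymbol{\Yi{0}}\xrightarrow{p}Y^*$ already established in \cref{prop:2} can simply be pushed through the convolution. Concretely, I would write $\boldsymbol{\Yi{i}}=\sba{i}\,\boldsymbol{\Yi{0}}+\sqrt{1-\bar{\alpha}_i}\,Z$ with $Z\sim\mathcal{N}(\mathbf{0},I)$ drawn independently of $\boldsymbol{\Yi{0}}\sim\pYi{0}(\cdot)$, so that the density of $\boldsymbol{\Yi{i}}$ is the convolution
\begin{equation*}
    \pYi{i}(y)=\int \pYi{i\mid 0}(y\mid \Yi{0})\,\pYi{0}(\Yi{0})\,d\Yi{0}=\E{\Yi{0}\sim \pYi{0}}{g\!\left(y-\sba{i}\,\Yi{0}\right)},
\end{equation*}
where $g$ denotes the density of the fixed forward kernel $\mathcal{N}(\mathbf{0},(1-\bar{\alpha}_i)I)$. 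The entire argument then reduces to passing the limit $\lambda\to 0$ inside this expectation.

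First I would invoke \cref{prop:2} to obtain $\boldsymbol{\Yi{0}}\xrightarrow{p}Y^*$, whence $\sba{i}\,\boldsymbol{\Yi{0}}\xrightarrow{p}\sba{i}\,Y^*$, since scaling by the deterministic constant $\sba{i}$ preserves convergence in probability. Convergence in probability to a constant implies weak convergence, so the law of $\sba{i}\,\boldsymbol{\Yi{0}}$ converges weakly to the Dirac mass $\delta_{\sba{i}Y^*}$. Then, for each fixed $y$ the map $z\mapsto g(y-z)$ is bounded and continuous, so by the portmanteau theorem the expectation above converges to $g\!\left(y-\sba{i}Y^*\right)$, which is exactly the density of $\mathcal{N}(\sba{i}Y^*,(1-\bar{\alpha}_i)I)$ at $y$. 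This gives pointwise convergence of the densities, from which convergence in distribution follows; equivalently, one may apply Slutsky's theorem directly to the sum $\sba{i}\,\boldsymbol{\Yi{0}}+\sqrt{1-\bar{\alpha}_i}\,Z$, whose first summand converges in probability to a constant while the second is a fixed Gaussian, yielding the claimed limit $\mathcal{N}(\sba{i}Y^*,(1-\bar{\alpha}_i)I)$.

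The argument is essentially routine once the convolution structure is exposed, so I do not anticipate a substantial obstacle; the only points needing care are (i) that the forward noise $Z$ is independent of $\boldsymbol{\Yi{0}}$ and, crucially, independent of $\lambda$ — which is precisely how \cref{eq:forward} is constructed — and (ii) the interchange of limit and integral, which is licensed by the boundedness of the Gaussian kernel $g$ together with weak convergence (portmanteau). I would also emphasize that, because the fixed smooth kernel $g$ upgrades weak convergence of $\boldsymbol{\Yi{0}}$ into \emph{pointwise} convergence of the convolved densities, the stronger ``convergence in density'' phrasing of the statement is genuinely justified, and not merely convergence in distribution.
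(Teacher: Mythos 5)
Your proposal is correct and takes essentially the same route as the paper: the paper's entire proof is the single remark that \cref{prop:3} follows ``by using Slutsky's theorem on \cref{prop:2},'' i.e.\ exactly your decomposition $\boldsymbol{\Yi{i}}=\sqrt{\bar{\alpha}_i}\,\boldsymbol{\Yi{0}}+\sqrt{1-\bar{\alpha}_i}\,Z$ with $\boldsymbol{\Yi{0}}\xrightarrow{p}Y^*$ and $Z$ an independent, $\lambda$-independent Gaussian. Your extra convolution/portmanteau step, which upgrades this to pointwise convergence of the densities, is a modest strengthening the paper does not spell out, and it is in fact what best justifies the proposition's ``converge in density'' phrasing.
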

\cref{prop:3} is derived by using Slutsky's theorem on \cref{prop:2} and offers insight into choosing the stepsize as discussed in \cref{sec:algo_unconstrained}.

\subsection{Black-box Optimization with MBD}
\label{sec:Apd_BBO}

As a zeroth order optimizer, MBD is capable of addressing both trajectory optimization and broader, high-dimensional unconstrained optimization challenges. Such black-box optimization tasks are universally acknowledged as difficult~\cite{akimotoTheoreticalFoundationCMAES2012,yiImprovingSampleEfficiency2024}. We first show superior performance of MBD within this black-box optimization context. In such settings, the Bayesian Optimization technique struggles due to the computational intensity required to develop surrogate models and identify new potential solutions~\cite{erikssonScalableGlobalOptimization2019}. Alternative black-box optimization strategies~\cite{hansenCMAEvolutionStrategy2023} are not limited by computational issues but tend to be less efficient because they do not estimate the black-box function as accurately.  MBD's effectiveness is evaluated using two well-known highly non-convex black-box optimization benchmarks: Ackley~\cite{ackleyConnectionistMachineGenetic1987} and Rastrigin~\cite{balandatBoTorchFrameworkEfficient2020}, each tested across three different dimensionalities. Comparisons were made with CMA-ES~\cite{hansenCMAEvolutionStrategy2023}, TuRBO~\cite{erikssonScalableGlobalOptimization2019}, LA-MCTS~\cite{wangLearningSearchSpace2020}, HesBO~\cite{nayebiFrameworkBayesianOptimization2019}, Shiwa~\cite{liuVersatileBlackBoxOptimization2020}, and BAxUS~\cite{papenmeierIncreasingScopeYou2023}.

\begin{figure}[ht]
    \centering
    \includegraphics[width=1.0\linewidth, bb=0 0 849.6 351.4]{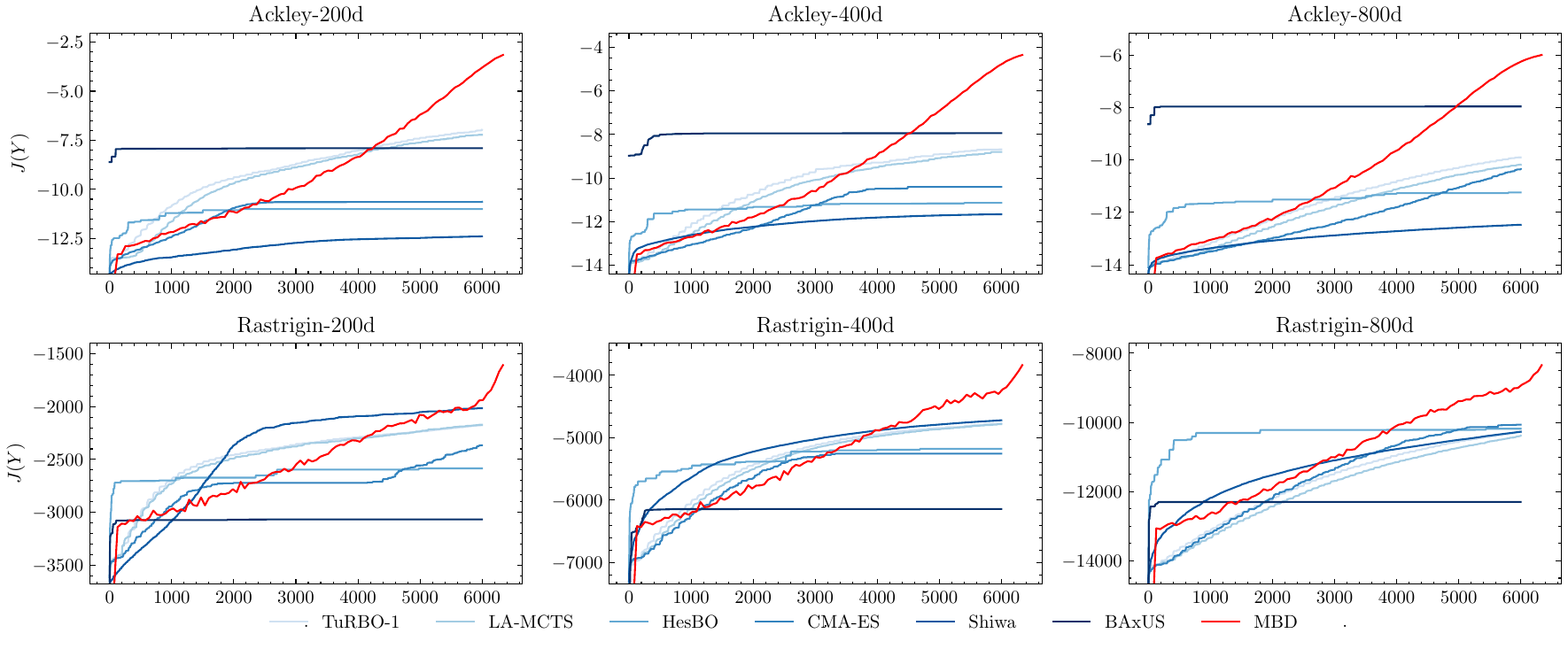}
    \caption{Performance of MBD on high-dimensional black-box optimization benchmarks. \textcolor{red}{MBD} outperforms other \textcolor{blue}{Gaussian Process-based Bayesian Optimization} methods by a clear margin.}
    \label{fig:bbo}
\end{figure}

\cref{fig:bbo} shows the performance of MBD on the Ackley and Rastrigin benchmarks.
MBD demonstrates superior performance over other algorithms for several reasons.
Firstly, the implementation of a scheduled forward process that determines the total number of samples consequently boosts sample efficiency.
Secondly, the application of \SA{} on various $\log{\pYi{i}(\Yi{i})}$ facilitates its escape from local optima of varying scales.
It is important to acknowledge that comparing computational efficiency may not be entirely fair, given that black-box optimization problems typically involve functions that are costly to evaluate.
However, MBD markedly outperforms other Gaussian Process-based Bayesian Optimization approaches, achieving computational time savings of more than twentyfold, similar to the improvements observed with different evolutionary optimization strategies.

Here are the implementation detail for the benchmarks. For the BO benchmarks, the experiments were conducted on an A100 GPU because of the high computational demands of the Gaussian Process Regression Model it incorporates.

\textbf{TuRBO}: TuRBO is implemented based on tutorials from Botorch~\cite{balandatBoTorchFrameworkEfficient2020}.

\textbf{LA-MCTS}: LA-MCTS, we refer to authors’ reference implementations, and use TuRBO as its local BO solver~\cite{wangLearningSearchSpace2020}.

\textbf{HesBO}: For HesBO, we  refer to authors’ reference implementations~\cite{nayebiFrameworkBayesianOptimization2019}. We transformed default GP component into Gpytorch version for faster inference speed on GPU. We set the embedding dimension to 20 for all tasks

\textbf{CMA-ES}: We use pycma\footnote{\href{pycma}{https://github.com/CMA-ES/pycma}
} to implement CMA-ES, and use default setting except setting population size eqauls to batch size.

\textbf{Shiwa}: We use Nevergrad\footnote{\href{nevergrad}{https://github.com/facebookresearch/nevergrad}} to implement Shiwa, and use default setting to run experiments.

\textbf{BAxUS}: We refer to the authors' reference implementations~\cite{papenmeierIncreasingScopeYou2023}.

\subsubsection{MBD for DNN Training without Gradient Information}
To further demonstrate the effectiveness of MBD in high-dimensional systems, we apply MBD to optimize an MLP network for MNIST classification~\cite{MNISTDatabaseHandwritten} without access to the gradient information. 
MBD achieve $85.5\%$ accuracy with $256$ samples within $2$s, which is comparable to the performance of the SGD optimizer with momentum ($92.7\%$ accuracy). 
We use MLP with $2$ hidden layers, each with $32$ neurons, and ReLU activation function. The input is flattened to $784$ dimensions, and the output is a $10$-dimensional vector. 
We use cross-entropy loss as the objective function. The network has $27,562$ parameters in total, which makes sampling-based optimization challenging.
MBD can effectively optimize the network with a small number of samples, demonstrating its effectiveness in high-dimensional black-box optimization tasks.

\subsection{MBD with Demonstration Explaination}
\label{sec:demo_exp}

Data-augmented MBD calculate the score function with demostration as follows:

\begin{align}
                         & \Yi{i-1}                              =  \frac{1}{\sqrt{\alpha_{i}}}\left(\Yi{i} + (1-\bar{\alpha}_{i})\nabla_{\Yi{i}}\log{\pYi{i}(\Yi{i})}\right) \\
                         & \nabla_{\Yi{i}}\log{\pYi{i}(\Yi{i})} = -\frac{\Yi{i}}{1-\bar{\alpha}_{i}} + \frac{\sqrt{\bar{\alpha}_{i}}}{1-\bar{\alpha}_{i}}\bar{Y}^{(0)}        \\
    \mathrm{where} \quad & \bar{Y}^{(0)} = \frac{\sum_{\Yi{0} \in \mathcal{Y}^{(i)}_d} \Yi{0} w(\Yi{0})}{\sum_{\Yi{0} \in \mathcal{Y}^{(i)}_d} w(\Yi{0})}                     \\ &  w(\Yi{0}) = \max{\left\{
        \begin{array}{c}
            w_\text{model}(\Yi{0})=p_d(\Yi{0})p_J(\Yi{0})p_g(\Yi{0}), \\
            w_\text{demo}((\Yi{0}))=p_\text{demo}(\Yi{0})p_J(Y_\text{demo})p_g(Y_\text{demo})
        \end{array}
        \right\}}
\end{align}

\begin{figure}[h!]
    \centering
    \includegraphics[width=1.0\linewidth, bb=0 0 703.395 220.985]{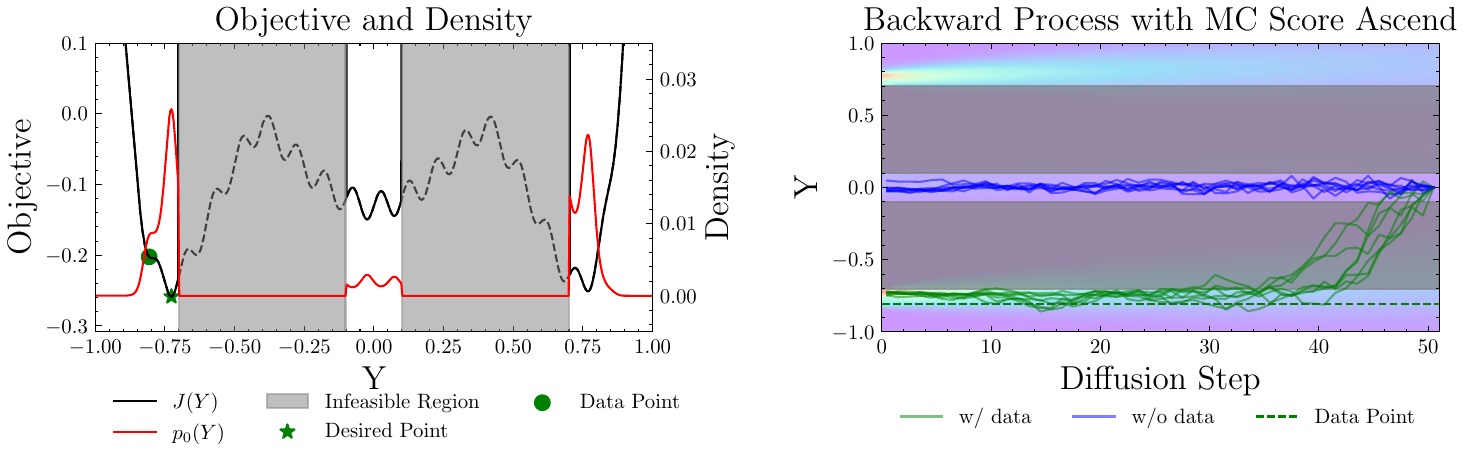}
    \caption{MBD with data vs. without data on a nonconvex function with constraints $||Y|-0.4|>0.3$.
        We want MBD converge to the optimal point \textcolor{green}{\FiveStar} with the help of demonstration data \textcolor{green}{\ding{108}}. Although the demostration point is not optimal, MBD can still converge to the optimal point with the guidance of the demonstration data. Here data serves as a regularization term to guide the diffusion process to the negative optimal point while allowing to use model further to refine the solution.}
    \label{fig:distributions_demo}
\end{figure}

where demonstrate likelihood term $w_\text{demo}(\Yi{0})$ will draw samples towards data without considering the model.
Given $w=w_\text{demo}$, $\bar{Y}^{(0)} = \frac{\sum_{\Yi{0} \in \mathcal{Y}^{(i)}_d} \Yi{0} p_\text{demo}(\Yi{0})}{\sum_{\Yi{0} \in \mathcal{Y}^{(i)}_d} p_\text{demo}(\Yi{0})}=Y_\text{demo}$. The score function would be $\nabla_{\Yi{i}}\log{\pYi{i}(\Yi{i})} = -\frac{\Yi{i}}{1-\bar{\alpha}_{i}} + \frac{\sqrt{\bar{\alpha}_{i}}}{1-\bar{\alpha}_{i}}Y_\text{demo}$, which means the score function is a linear combination of the current sample and the demonstration data.

If we don't use \cref{eq:target_demo} and employ the posterior distribution $p(\Yi{0}|Y_\text{demo}) \propto \pYi{0}(\Yi{0})p_\text{demo}(\Yi{0})$, it will yields update weights $w=w_\text{demo} w_\text{model}$, which will draw samples to both model and demonstration data.
If the demonstration data is not optimal, the final solution will be a compromise between the model and demonstration data.
In~\cref{fig:distributions_demo}, the resulted solution will lie between optimal point \textcolor{green}{\FiveStar} with the help of demonstration data \textcolor{green}{\ding{108}}.

Using the max function in $w$ can aviod this issue. In the early stage while $p_J(\Yi{0})$ is low due to poor sample quality, $w_\text{demo}$ will dominate thanks to the high $p_J(Y_\text{demo})$. This will draw samples towards the demonstration data as shown in the earlier stage of~\cref{fig:distributions_demo}. As the sample quality improves and $p_J(\Yi{0}) > p_J(Y_\text{demo})$, $w_\text{model}$ will dominate and the sample will converge to the optimal point.

\subsection{Experiment Details}
\label{sec:exp_details}

\subsubsection{Simulator and Environment}
\label{sec:env}

We leverage the GPU-accelerated simulator Google Brax~\cite{freemanBraxDifferentiablePhysics2021} to design the locomotion and manipulation tasks.
All task is set to use positional backend in Brax except for the pushT task, which uses the generalizable backend for better contact dynamics simulation. Here we provide a brief description of each task implementations:
\begin{enumerate}
    \item \textbf{Ant}: The Ant task is a 3D locomotion task where the agent is required to move forward as fast as possible. The reward is composed of the forward velocity of the agent and control cost, same as the original Brax implementation. The control dimension is 8.
    \item \textbf{Hopper}: The Hopper task is a 2D locomotion task where the agent is required to jumping forward as fast as possible. We use the same reward function as the original Brax implementation. We modify the simulation substeps from 10 to 20 for longer planning horizon given the same control node. The control dimension is 3.
    \item \textbf{Walker2d}: The Walker2d task is a 2D locomotion task where the agent is required to walk forward. The reward is composed of keep the agent upright and moving forward. The control dimension is 6.
    \item \textbf{Halfcheetah}: The Halfcheetah task is a 2D locomotion task where the agent is required to run forward. The reward is composed of the forward velocity of the agent and control cost. We follow the same reward function as the original Brax implementation. The control dimension is 6.
    \item \textbf{Humanoidrun}: The Humanoidrun task is a 3D locomotion task where the agent is required to run forward. The reward is composed of the forward velocity of the agent and standing upright. Here we also modify the simulation substeps from 10 to 20 for longer planning horizon. The control dimension is 17.
    \item \textbf{Humanoidstandup}: The Humanoidstandup task is a 3D locomotion task where the agent is required to stand up. The reward is the upright torso position of the agent. The control dimension is 17.
    \item \textbf{PushT}: The PushT task is a 2D manipulation task where you can apply force to a sphere to push the T-shaped object to the target location. The reward is composed of the distance between the target and the object and orientation difference between the target and the object. To make the task more challenging, we randomize the target location  20cm away from the initial position and make sure the rotational angle is greater than 135 degrees, which makes it hard to solve the task with single continous contact policy. The control dimension is 2.
    \item \textbf{Car2D}: We implement a 2D car task with standard bicycle dynamics model, where state is $x=[x, y, \theta, v, \delta]$, and action is $u=[a, \delta]$. The dynamics is defined as $\dot{x} = f(x, u) = [v \cos(\theta), v \sin(\theta), \frac{v}{L} \tan(\delta), a, \delta]$.
          The constraints are defined as the U-shape area in the middle of the map, where the car cannot enter.
          The reward is composed of the distance between the target and the car and the control cost.
          The control dimension is 2.
\end{enumerate}

\subsubsection{MBD Hyperparameters}
\label{sec:mbd_impl}

In general, MBD is very little hyperparameters to tune compared with RL. We use the same hyperparameters for all the tasks, with small tweaks for harder tasks.

\begin{table}[ht]
    \centering
    \begin{tabular}{@{}llll@{}}
        \toprule
        Task Name       & Horizon & Sample Number & Temperature \(\lambda\) \\
        \midrule
        Ant             & 50      & 100           & 0.1                     \\
        Halfcheetah     & 50      & 100           & 0.4                     \\
        Hopper          & 50      & 100           & 0.1                     \\
        Humanoidstandup & 50      & 100           & 0.1                     \\
        Humanoidrun     & 50      & 300           & 0.1                     \\
        Walker2d        & 50      & 100           & 0.1                     \\
        PushT           & 40      & 200           & 0.2                     \\
        \bottomrule
    \end{tabular}
    \caption{MBD hyperparameters for various tasks}
    \label{tab:task_params}
\end{table}

For diffusion noise schedulling, we use simple linear scheduling $\beta_0 = 1 \times 10^{-4}$ and $\beta_N = 1 \times 10^{-2}$, and the diffusion step number is $100$ across all tasks. Each step's $\alpha_i$ is calculated as $\alpha_i = 1 - \beta_i$. 

\subsubsection{Baseline Algorithms Implementation}
\label{sec:baseline}

For reinforcement learning implementation, we strictly follow the hyperparameters and implementation details provided by the original Brax repository, which optimize for the best performance. For our self-implemented PushT task, the hyperparameters is ported from Pusher task in Brax for fair comparison. The hyperparameters for the RL tasks are shown in \cref{tab:rl_config} and \cref{tab:rl_specifics}.

\begin{table}[h]
    \centering
    \begin{tabular}{@{}lllll@{}}
        \toprule
        Environment     & Algorithm & Timesteps & Reward Scaling & Episode Length \\
        \midrule
        Ant             & PPO       & 100M      & 10             & 1000           \\
        Hopper          & SAC       & 6.55M     & 30             & 1000           \\
        Walker2d        & PPO       & 50M       & 1              & 1000           \\
        Halfcheetah     & PPO       & 50M       & 1              & 1000           \\
        Pusher          & PPO       & 50M       & 5              & 1000           \\
        PushT           & PPO       & 100M      & 1.0            & 100            \\
        Humanoidrun     & PPO       & 100M      & 0.1            & 100            \\
        Humanoidstandup & PPO       & 100M      & 0.1            & 1000           \\
        \bottomrule
    \end{tabular}
    \caption{General RL configuration for various environments}
    \label{tab:rl_config}
\end{table}

\begin{table}[h!]
    \centering
    \begin{tabular}{@{}lllll@{}}
        \toprule
        Environment     & Minibatches & Updates/Batch & Discounting & Learning Rate      \\
        \midrule
        Ant             & 32          & 4             & 0.97        & $3 \times 10^{-4}$ \\
        Hopper          & 32          & 4             & 0.997       & $6 \times 10^{-4}$ \\
        Walker2d        & 32          & 8             & 0.95        & $3 \times 10^{-4}$ \\
        Halfcheetah     & 32          & 8             & 0.95        & $3 \times 10^{-4}$ \\
        Pusher          & 16          & 8             & 0.95        & $3 \times 10^{-4}$ \\
        PushT           & 16          & 8             & 0.99        & $3 \times 10^{-4}$ \\
        Humanoidrun     & 32          & 8             & 0.97        & $3 \times 10^{-4}$ \\
        Humanoidstandup & 32          & 8             & 0.97        & $6 \times 10^{-4}$ \\
        \bottomrule
    \end{tabular}
    \caption{RL specifics for various environments}
    \label{tab:rl_specifics}
\end{table}

For the zeroth order optimization tasks, we the same hyperparameters as the MBD algorithm.

\subsubsection{Demonstration Collections}

For RRT algorithm in Car2D task, we set the max step size to $0.2$, and the max iterations to $1000$ given the maximum episode length is $50$.

For the demonstration collection in Humanoid Jogging task, we first download the mocap data which contains each joints' position in the world frame. Then we use the joint data to calculate the position of torso, thigh and shin position as partial state reference for our task.

\end{document}